\documentclass[twocolumn,conference,letterpaper]{IEEEtran}
\usepackage[left=0.75in,right=0.75in,top=1in,bottom=0.75in]{geometry}

\usepackage{amssymb,amsmath,latexsym,amsfonts,amsthm,mathtools}
\usepackage{graphicx}
\usepackage{float,subcaption}
\usepackage{textcomp}
\usepackage{xcolor}
\usepackage{booktabs}
\usepackage{cite}

\usepackage[hidelinks]{hyperref}
\hypersetup{colorlinks, breaklinks, citecolor=blue, linkcolor=blue, urlcolor=blue}
\usepackage[nameinlink]{cleveref}
\Crefformat{figure}{#2Fig.~#1#3}
\Crefmultiformat{figure}{Figs.~#2#1#3}{ and~#2#1#3}{, #2#1#3}{ and~#2#1#3}

\usepackage{tikz}
\usetikzlibrary{automata, shapes, arrows, calc, arrows.meta, fit, positioning}

\theoremstyle{plain}
\newtheorem{theorem}{Theorem}
\newtheorem{lemma}{Lemma}
\newtheorem{corollary}{Corollary}
\newtheorem{proposition}{Proposition}
\newtheorem*{problem*}{Problem}
\theoremstyle{remark}

\newtheorem{assumption}{Assumption}

\newtheorem{definition}{Definition}
\theoremstyle{definition}

\usepackage{mathrsfs}

\usepackage{newtxmath}

\DeclareMathOperator{\rank}{rank}
\DeclareMathOperator{\dist}{dist}
\DeclareMathOperator{\blkdiag}{blkdiag}
\DeclareMathOperator*{\argmax}{arg\,max}
\DeclareMathOperator*{\argmin}{arg\,min}
\newcommand{\R}{\mathbb{R}}
\newcommand{\W}{\mathcal{W}}
\newcommand{\Z}{\mathcal{Z}}
\newcommand{\T}{\mathcal{T}}
\newcommand{\Aset}{\mathcal{A}}
\newcommand{\B}{\mathbb{B}}
\newcommand{\M}{\mathbf{M}}
\newcommand{\A}{\mathbf{A}}
\newcommand{\e}{\mathbf{e}}
\newcommand{\w}{\mathbf{w}}
\newcommand{\z}{\mathbf{z}}
\newcommand{\one}{\mathbf{1}}
\newcommand{\zero}{\mathbf{0}}

\newcommand{\intset}{\operatorname{int}}
\newcommand{\bd}{\partial}

\newcommand{\SO}{\mathrm{SO}}

\definecolor{cNavy}{RGB}{20,45,70}
\definecolor{cTeal}{RGB}{42,132,130}
\definecolor{cGold}{RGB}{190,135,55}
\definecolor{cGray}{RGB}{235,238,240}

\IEEEoverridecommandlockouts

\begin{document}

\title{Contact-Persistent Full Actuation for Aerial Physical Interaction}

\author{Abhimanyu Khadga, Abhinav Sinha,~\IEEEmembership{Senior Member,~IEEE}, and Shashi Ranjan Kumar,~\IEEEmembership{Senior Member,~IEEE}
\thanks{A. Khadga and A. Sinha are with the Guidance, Autonomy, Learning, and Control for Intelligent Systems (GALACxIS) Lab, Department of Aerospace Engineering and Engineering Mechanics, University of Cincinnati, OH 45221, USA (e-mails: khadgaau@mail.uc.edu, abhinav.sinha@uc.edu). S. R. Kumar is with the Intelligent Systems and Control (ISaC) Lab, Department of Aerospace Engineering, Indian Institute of Technology Bombay, Powai, Mumbai 400076, India (email: srk@aero.iitb.ac.in).}
}

\maketitle
\thispagestyle{empty}

\begin{abstract}
Fully actuated unmanned aerial vehicles (UAVs) are usually certified through rank conditions on a control-allocation matrix or through free-flight tracking performance. For aerial physical interaction, this certification may be incomplete. During sustained contact, part of the available wrench is consumed by the interaction task, and only the residual wrench remains available for stabilization, disturbance rejection, and maneuvering. This paper introduces a control-theoretic framework for \emph{contact-persistent full actuation}. A rigid-body model on $\R^{3}\times\SO\left(3\right)$ is combined with a morphology-dependent wrench map that captures fixed-tilt, variable-tilt, coaxial, and overactuated multirotor architectures. We define feasible wrench sets under actuator limits, residual wrench sets under task loading, and residual authority margins that strengthen the usual rank-based notion of full actuation. The main result shows that contact-persistent full actuation is equivalent to interiority of the task wrench in the constrained feasible wrench polytope, and that the residual authority radius is exactly the distance to the polytope boundary. We further introduce a signed residual-margin certificate for infeasible and boundary cases, a slack-maximizing allocation certificate, and a robust implementability condition that can be used as a margin-aware safety filter. Numerical evaluation on an abstract tilted hexarotor shows that full row rank alone does not imply feasible contact operation. Intermediate tilt angles preserve residual authority during pushing, whereas small or excessive tilts fail because of lateral-force deficiency or hover-margin loss.
\end{abstract}

\begin{IEEEkeywords}
Fully actuated UAVs, aerial physical interaction, control allocation, residual wrench authority, actuator constraints, contact-rich robotics.
\end{IEEEkeywords}

\section{Introduction}

Multirotor UAVs have become standard platforms for agile robotics because they are mechanically simple, hover-capable, and easy to deploy. Conventional coplanar multirotors, however, are underactuated. The propellers produce nearly parallel thrust directions, and lateral acceleration is obtained by reorienting the body. This coupling is acceptable for free-flight trajectory tracking \cite{rashad2020review,hamandi2021taxonomy}, but it is restrictive when the vehicle must maintain a desired attitude while exerting forces on the environment. Fully actuated and overactuated multirotors address this limitation by using fixed tilted rotors, active tilting rotors, noncoplanar propeller arrangements, coaxial thrust-vectoring designs, or aerodynamic surfaces to generate a six-dimensional wrench \cite{rajappa2015modeling,ryll2015overactuatedquadrotor,brescianini2016design,brescianini2018omni,rashad2017design,rashad2020review,hamandi2021taxonomy,kotarski2021performance}.

The literature on fully actuated aerial vehicles has developed along three closely related lines. The first line is architectural wherein tilted-propeller hexarotors, overactuated quadrotors, omnidirectional octorotors, and morphing/variable-tilt vehicles have been designed to decouple force generation from attitude and to enlarge the attainable wrench set \cite{rajappa2015modeling,ryll2015overactuatedquadrotor,brescianini2016design,brescianini2018omni,allenspach2020design,ryll2022fasthex}. The second line is interaction-oriented wherein fully actuated aerial robots have been used as flying end-effectors, contact-inspection systems, omnidirectional manipulation platforms, and heavy-object pushing vehicles \cite{park2018odar,ryll2019flyingend,bodie2019omnidirectional,bodie2020active,hwang2024pushing,hui2024passive,veenstra2026sixd,lin2025float}. The third line concerns allocation and implementation wherein modern schemes account for task priority, actuator limits, singularities, overactuation, power dynamics, and saturation avoidance \cite{allenspach2020design,hwang2024pushing,cuniato2024allocation}.

Despite these advances, most certification arguments still focus on rank, wrench-generation ability, tracking performance, or force-control performance for a specific experiment. These criteria are necessary but not sufficient for sustained contact-rich operation. During physical interaction, part of the feasible wrench set is consumed by hover, pushing, bracing, or end-effector loading, and the remaining portion must still support stabilization, disturbance rejection, and maneuvering. Thus, the relevant question is not solely whether a desired contact wrench is feasible, but whether it is feasible while preserving a nonzero stabilizing wrench margin.

This paper is built around the following observation. The common condition of having the rank of the allocation matrix equal to six certifies structural wrench generation, but it does not certify contact-rich operation under actuator bounds. A fully actuated UAV may have rank six and still fail to push, brace, or reject disturbances if the required task wrench lies near the boundary of the feasible wrench set. For example, a horizontal pushing task consumes lateral force authority while gravity compensation consumes vertical authority. If the corresponding actuator allocation is close to saturation, the vehicle may have no meaningful wrench margin left for stabilizing perturbations. Hence, contact-rich actuation should be evaluated by the dimension of the wrench map, as well as by the residual wrench set left after task loading.

The contribution of this paper is an actuation-theoretic certification framework for contact-rich fully actuated UAVs. Unlike rank-based tests, the proposed framework certifies whether a contact task leaves a nonzero residual wrench neighborhood for stabilization and disturbance rejection under actuator limits. First, we formulate morphology-dependent feasible wrench polytopes for a broad class of tilted, coaxial, variable-tilt, and overactuated multirotors. Second, we define contact-persistent full actuation as a task-relative interiority property of the constrained feasible wrench set. Third, we derive exact and computable residual-authority certificates, including signed margins, task-set margins, and allocation-slack bounds. Fourth, we derive robust implementability and margin-aware filtering conditions that connect residual authority to closed-loop wrench feasibility. Finally, we show through a tilted-hexarotor study that rank-six allocation can still fail under sustained pushing, and that intermediate or task-adaptive tilt choices maximize residual stabilizing authority.

The paper is intentionally hardware-independent. It provides a way to ask, before building a prototype, whether a candidate fully actuated UAV morphology is suitable for sustained contact-rich operation. The resulting residual authority metric is also compatible with later work on morphology-aware allocation, saturation-aware control, and safety filters.

\section{Modeling and Feasible Wrench Sets}
\label{sec:model}
Consider a UAV modeled as a rigid body evolving on $\R^{3}\times\SO\left(3\right)$. Let $\mathbf{p,v}\in\R^{3}$ be the inertial position and velocity, $\mathbf{R}\in\SO\left(3\right)$ be the rotation matrix from body to inertial coordinates, and $\boldsymbol{\omega}\in\R^{3}$ be the body angular velocity. The dynamics of the UAV can be written as
\begin{subequations}
\begin{align}
    \dot{\mathbf{p}} &= \mathbf{v},
    \label{eq:p_dot}\\
    m\dot{\mathbf{v}} &= -mg\e_{3}+\mathbf{R}\mathbf{f}+\mathbf{d}_{f},
    \label{eq:v_dot}\\
    \dot{\mathbf{R}} &= \mathbf{R}S\left(\boldsymbol{\omega}\right),
    \label{eq:R_dot}\\
    \mathbf{J}\dot{\boldsymbol{\omega}} &= -\boldsymbol{\omega}\times\mathbf{J}\boldsymbol{\omega}+\boldsymbol{\tau}+\mathbf{d}_{\tau},
    \label{eq:omega_dot}
\end{align}
\end{subequations}
where $m>0$ is the mass, $\mathbf{J}=\mathbf{J}^{\top}>0$ is the inertia matrix, $g>0$ is gravitational acceleration, $\e_{3}=\left[0\;0\;1\right]^{\top}$, $\mathbf{f}$ and  $\mathbf{\tau}$ denotes the body-frame control force and torque, and $\mathbf{d}_{f}$ and $\mathbf{d}_{\tau}$ denote unmodeled external effects. The map $S:\R^{3}\rightarrow\R^{3\times3}$ is the skew-symmetric operator satisfying $S\left(\mathbf{x}\right)\mathbf{y}=\mathbf{x}\times\mathbf{y}$ for all $\mathbf{x},\mathbf{y}\in\R^{3}$. The body wrench is
\begin{align}
\w=\begin{bmatrix}\mathbf{f}^{\top}&\boldsymbol{\tau}^{\top}\end{bmatrix}^{\top}\in\R^{6}.
\end{align}
We assume that the vehicle has $N$ thrust-generating modules. The $i$th module is located at $\mathbf{r}_{i}\in\R^{3}$ in body coordinates and produces thrust magnitude $\lambda_{i}\geq0$ along the unit direction $\mathbf{b}_{i}\left(\boldsymbol{\alpha}_{i}\right)\in\R^{3}$. Here $\boldsymbol{\alpha}_{i}$ denotes architecture-dependent variables such as fixed tilt angles, servo tilt angles, or local rotor-frame orientation variables. The force and torque generated by module $i$ are modeled as

\begin{align}
    \mathbf{f}_{i} &= \lambda_{i}\mathbf{b}_{i}\left(\boldsymbol{\alpha}_{i}\right),\\
    \boldsymbol{\tau}_{i} &= \mathbf{r}_{i}\times \lambda_{i}\mathbf{b}_{i}\left(\boldsymbol{\alpha}_{i}\right)+\sigma_{i}c_{i}\lambda_{i}\mathbf{b}_{i}\left(\boldsymbol{\alpha}_{i}\right),
\end{align}
where $c_{i}\geq0$ is a drag-to-thrust coefficient and $\sigma_{i}\in\{-1,1\}$ is the spin direction. Therefore,
\begin{align}
    \w=\sum_{i=1}^{N}
    \begin{bmatrix}
        \mathbf{b}_{i}\left(\boldsymbol{\alpha}_{i}\right)\\
        S\left(\mathbf{r}_{i}\right)\mathbf{b}_{i}\left(\boldsymbol{\alpha}_{i}\right)+\sigma_{i}c_{i}\mathbf{b}_{i}\left(\boldsymbol{\alpha}_{i}\right)
    \end{bmatrix}\lambda_{i}.
    \label{eq:wrench_sum_short}
\end{align}
Let $\boldsymbol{\lambda}=\left[\lambda_{1}\;\cdots\;\lambda_{N}\right]^{\top}$ and let $\boldsymbol{\alpha}$ collect all morphology variables. Then, \eqref{eq:wrench_sum_short} is written compactly as $\w=\M\left(\boldsymbol{\alpha}\right)\boldsymbol{\lambda}$, where $\M\left(\boldsymbol{\alpha}\right)\in\R^{6\times N}$ is the allocation matrix. The actuator vector is constrained by $\boldsymbol{\lambda}\in\Lambda:=\left\{\boldsymbol{\lambda}\in\R^{N}:\boldsymbol{\lambda}_{\min}\leq\boldsymbol{\lambda}\leq\boldsymbol{\lambda}_{\max}\right\}$, where inequalities are componentwise. The morphology satisfies $\boldsymbol{\alpha}\in\Aset$, where $\Aset$ is compact. For a fixed morphology, the feasible wrench set is
\begin{align}
    \W\left(\boldsymbol{\alpha}\right):=\left\{\w\in\R^{6}:\w=\M\left(\boldsymbol{\alpha}\right)\boldsymbol{\lambda},\;\boldsymbol{\lambda}\in\Lambda\right\}.
    \label{eq:feasible_wrench}
\end{align}

\begin{definition}[Constrained Full Actuation]\label{def:constrained_full_actuation}
For a given operating wrench $\w_{0}$, the UAV is constrained fully actuated at $\left(\boldsymbol{\alpha},\w_{0}\right)$ if $\w_{0}\in\intset\left(\W\left(\boldsymbol{\alpha}\right)\right)$.
\end{definition}
At this stage, it is important to distinguish \Cref{def:constrained_full_actuation} from the condition $\rank\left(\M\right)=6$. Rank is a structural property of $\M\left(\boldsymbol{\alpha}\right)$, whereas constrained full actuation depends on actuator bounds and on the location of the operating wrench inside $\W\left(\boldsymbol{\alpha}\right)$.

\section{Contact-Persistent Full Actuation}
\label{sec:theory}
Let $\w_{\mathrm{task}}\in\R^{6}$ be the wrench committed to a contact task, such as pushing or bracing. The residual wrench set is
\begin{align}
    \W_{\mathrm{res}}\left(\boldsymbol{\alpha},\w_{\mathrm{task}}\right):=\left\{\Delta\w\in\R^{6}:\w_{\mathrm{task}}+\Delta\w\in\W\left(\boldsymbol{\alpha}\right)\right\}.
    \label{eq:residual_set}
\end{align}
The set $\W_{\mathrm{res}}$ contains the stabilizing and disturbance-rejection wrenches that remain after task loading.
\begin{definition}[Contact-persistent Full Actuation]
Given $\epsilon>0$ and $\w_{\mathrm{task}}$, the UAV is contact-persistently fully actuated at $\boldsymbol{\alpha}$ with margin $\epsilon$ if $\B_{\epsilon}\left(\zero_{6}\right)\subseteq\W_{\mathrm{res}}\left(\boldsymbol{\alpha},\w_{\mathrm{task}}\right)$, where $\B_{\epsilon}\left(\zero_{6}\right)=\{\zeta\in\R^{6}:\left\|\zeta\right\|\leq\epsilon\}$.
\end{definition}
\begin{assumption}
\label{ass:rank}
For the fixed morphology under consideration, $\rank\left(\M\left(\boldsymbol{\alpha}\right)\right)=6$, and $\lambda_{i,\min}<\lambda_{i,\max}$ for all $i\in\{1,\ldots,N\}$.
\end{assumption}

\begin{lemma}[Feasible Wrench Geometry]
\label{lem:geometry}
Under \Cref{ass:rank}, $\W\left(\boldsymbol{\alpha}\right)$ is a compact, convex, full-dimensional polytope in $\R^{6}$.
\end{lemma}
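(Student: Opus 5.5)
The plan is to view $\W(\boldsymbol{\alpha})$ as the image of the box $\Lambda$ under the linear map $\M(\boldsymbol{\alpha})$ and prove the three properties separately: compactness and convexity from general facts about linear images, polytopality from the vertex description of the box, and full-dimensionality from the rank condition in \Cref{ass:rank} together with the nondegeneracy of $\Lambda$. Throughout, $\boldsymbol{\alpha}$ is fixed, so I abbreviate $\M=\M(\boldsymbol{\alpha})$.

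\emph{Step 1 (compact and convex).} The set $\Lambda$ is a product of closed bounded intervals, so it is compact and convex. Linear maps are continuous, so $\M\Lambda$ is compact. Linear maps also preserve convex combinations, so $\M\Lambda$ is convex.

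\emph{Step 2 (polytope).} The box $\Lambda$ is the convex hull of its finitely many vertices $\{\mathbf{v}_k\}_{k=1}^{2^N}$, each of which has every coordinate equal to $\lambda_{i,\min}$ or $\lambda_{i,\max}$. Linearity gives
\begin{align*}
\M\,\mathrm{conv}\{\mathbf{v}_k\}=\mathrm{conv}\{\M\mathbf{v}_k\}.
\end{align*}
Hence $\W(\boldsymbol{\alpha})$ is the convex hull of finitely many points, which is a V-polytope. By the Minkowski--Weyl theorem it is equivalently an H-polytope.

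\emph{Step 3 (full-dimensional).} This is the only step that uses \Cref{ass:rank}, and it is where care is needed, because interiority must come from $\lambda_{i,\min}<\lambda_{i,\max}$ and not from rank alone. Pick the box center $\boldsymbol{\lambda}_c=(\boldsymbol{\lambda}_{\min}+\boldsymbol{\lambda}_{\max})/2$ and set $\delta=\min_i(\lambda_{i,\max}-\lambda_{i,\min})/2>0$. Then the sup-norm ball $\{\boldsymbol{\lambda}_c+\boldsymbol{\mu}:\|\boldsymbol{\mu}\|_\infty\le\delta\}$ lies in $\Lambda$.

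Since $\rank\M=6$, the map $\M$ is surjective onto $\R^6$, and it has a right inverse $\M^{\dagger}=\M^\top(\M\M^\top)^{-1}$. Given any $\Delta\w$ with $\|\Delta\w\|\le\epsilon$, take $\boldsymbol{\mu}=\M^{\dagger}\Delta\w$. This satisfies
\begin{align*}
\|\boldsymbol{\mu}\|_\infty\le\|\M^{\dagger}\|\,\epsilon .
\end{align*}
Choosing $\epsilon=\delta/\|\M^{\dagger}\|$ therefore gives $\M\boldsymbol{\lambda}_c+\Delta\w=\M(\boldsymbol{\lambda}_c+\boldsymbol{\mu})\in\W(\boldsymbol{\alpha})$. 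So $\W(\boldsymbol{\alpha})$ contains the Euclidean ball $\B_\epsilon(\M\boldsymbol{\lambda}_c)$, has nonempty interior, and is full-dimensional in $\R^6$.

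An alternative route for Step 3 is the open mapping theorem for surjective linear maps applied to $\intset\Lambda\neq\emptyset$. The explicit pseudoinverse bound is preferable, though, because it gives a quantitative radius that can be reused later for residual-authority margins.
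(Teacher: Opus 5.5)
Your proof is correct and follows the same route as the paper. Compactness and convexity come from the linear image of the box, polytopality from the linear image of a polytope (which you make explicit via the vertex hull), and full-dimensionality from full row rank. Your Step 3 makes quantitative the paper's one-line claim that the image contains an open set, explicitly invoking $\lambda_{i,\min}<\lambda_{i,\max}$, which the paper leaves implicit. It is essentially the right-inverse argument the paper later uses in \Cref{thm:certificate}, applied at the box center.
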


\begin{proof}
The set $\Lambda$ is a compact convex box. Its image under the linear map $\boldsymbol{\lambda}\mapsto\M\left(\boldsymbol{\alpha}\right)\boldsymbol{\lambda}$ is compact and convex. Since $\M\left(\boldsymbol{\alpha}\right)$ has full row rank, the image contains a nonempty open set in $\R^{6}$. Since a linear image of a polytope is a polytope, the claim follows.
\end{proof}
Here and throughout, $\bd\W\left(\boldsymbol{\alpha}\right)$ denotes the topological boundary of $\W\left(\boldsymbol{\alpha}\right)$. Since $\W\left(\boldsymbol{\alpha}\right)$ is compact under \Cref{ass:rank}, this is equivalently $\bd\W\left(\boldsymbol{\alpha}\right)=\W\left(\boldsymbol{\alpha}\right)\setminus\intset\left(\W\left(\boldsymbol{\alpha}\right)\right)$.
\begin{proposition}[Insufficiency of Rank]
\label{prop:rank_not_enough}
\Cref{ass:rank} does not imply that every desired wrench $\w_{d}\in\R^{6}$ is feasible. Moreover, if $\w_{d}\in\bd\W\left(\boldsymbol{\alpha}\right)$, then no $\epsilon>0$ satisfies $\B_{\epsilon}\left(\w_{d}\right)\subseteq\W\left(\boldsymbol{\alpha}\right)$.
\end{proposition}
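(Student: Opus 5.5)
The first claim follows from boundedness, and the second from the definition of the boundary; I would prove them in that order.

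\emph{Part one.} By \Cref{lem:geometry}, under \Cref{ass:rank} the set $\W(\boldsymbol{\alpha})$ is compact, hence bounded. A concrete witness is clean to write down. Set $C:=\max_{\boldsymbol{\lambda}\in\Lambda}\|\M(\boldsymbol{\alpha})\boldsymbol{\lambda}\|$, which is finite because $\Lambda$ is a compact box and $\boldsymbol{\lambda}\mapsto\|\M(\boldsymbol{\alpha})\boldsymbol{\lambda}\|$ is continuous. Any $\w_{d}$ with $\|\w_{d}\|>C$, for instance $\w_{d}=(C+1)\e$ for a unit vector $\e\in\R^{6}$, then satisfies $\w_{d}\notin\W(\boldsymbol{\alpha})$. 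Rank six only guarantees that $\W(\boldsymbol{\alpha})$ has nonempty interior, not that it equals $\R^{6}$. To make the point tangible I might add a one-line example: $N=6$, $\M=I_{6}$, $\Lambda=[0,1]^{6}$, and $\w_{d}=-\e_{1}$.

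\emph{Part two.} I would argue by contradiction. Suppose $\w_{d}\in\bd\W(\boldsymbol{\alpha})$ and that some $\epsilon>0$ gives $\B_{\epsilon}(\w_{d})\subseteq\W(\boldsymbol{\alpha})$. The closed ball contains the open ball of radius $\epsilon$ around $\w_{d}$. That open ball is an open neighborhood of $\w_{d}$ lying inside $\W(\boldsymbol{\alpha})$, so $\w_{d}\in\intset(\W(\boldsymbol{\alpha}))$. This contradicts the identity $\bd\W(\boldsymbol{\alpha})=\W(\boldsymbol{\alpha})\setminus\intset(\W(\boldsymbol{\alpha}))$ stated after \Cref{lem:geometry}. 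It is also worth noting the direct geometric picture. Since $\W(\boldsymbol{\alpha})$ is a closed convex polytope, there is a supporting halfspace $\{\w:\mathbf{a}^{\top}\w\le\mathbf{a}^{\top}\w_{d}\}$ with $\mathbf{a}\neq\zero$. The point $\w_{d}+\tfrac{\epsilon}{2}\mathbf{a}/\|\mathbf{a}\|$ then lies in $\B_{\epsilon}(\w_{d})$ but outside $\W(\boldsymbol{\alpha})$, which gives an explicit violating perturbation.

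None of these steps should be a real obstacle. The only care needed is the notational point that $\B_{\epsilon}$ was defined as a closed ball, so I pass to the open ball it contains before invoking the definition of interior.
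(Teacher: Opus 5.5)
Your proof is correct and follows the paper's own argument. Compactness of $\W(\boldsymbol{\alpha})$, inherited from the box $\Lambda$, rules out $\W(\boldsymbol{\alpha})=\R^{6}$, and a boundary point is not interior, so every ball around it leaves $\W(\boldsymbol{\alpha})$; your witness $(C+1)\e$ and supporting-halfspace perturbation are more concrete versions of these same steps. One small caveat: your optional illustration $\M=\mathbf{I}_{6}$ cannot arise from the thrust-module model, because a pure-torque column would need $\mathbf{b}_{i}=\zero$, but your general boundedness argument does not rely on it.
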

\begin{proof}
Since $\Lambda$ is compact, $\W\left(\boldsymbol{\alpha}\right)$ is compact and cannot equal $\R^{6}$. Hence some wrenches are infeasible even though the allocation map is full row rank. If $\w_{d}$ lies on the boundary of $\W\left(\boldsymbol{\alpha}\right)$, every open ball centered at $\w_{d}$ contains points outside $\W\left(\boldsymbol{\alpha}\right)$.
\end{proof}

\begin{theorem}[Residual Authority and Interiority]
\label{thm:main}
Let $\w_{\mathrm{task}}\in\W\left(\boldsymbol{\alpha}\right)$. There exists $\epsilon>0$ such that $\B_{\epsilon}\left(\zero_{6}\right)\subseteq\W_{\mathrm{res}}\left(\boldsymbol{\alpha},\w_{\mathrm{task}}\right)$ if and only if $\w_{\mathrm{task}}\in\intset\left(\W\left(\boldsymbol{\alpha}\right)\right)$. Moreover, the largest residual authority radius is
\begin{align}
\epsilon^{\star}\left(\boldsymbol{\alpha},\w_{\mathrm{task}}\right)=\dist\left(\w_{\mathrm{task}},\bd\W\left(\boldsymbol{\alpha}\right)\right).
    \label{eq:exact_margin}
\end{align}
\end{theorem}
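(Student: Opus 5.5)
The plan is to unwind the definition of the residual set and reduce everything to a statement about closed balls and the boundary of a compact convex set. By \eqref{eq:residual_set}, $\Delta\w\in\W_{\mathrm{res}}\left(\boldsymbol{\alpha},\w_{\mathrm{task}}\right)$ if and only if $\w_{\mathrm{task}}+\Delta\w\in\W\left(\boldsymbol{\alpha}\right)$. Hence $\B_{\epsilon}\left(\zero_{6}\right)\subseteq\W_{\mathrm{res}}$ is equivalent to the translated inclusion $\B_{\epsilon}\left(\w_{\mathrm{task}}\right)\subseteq\W\left(\boldsymbol{\alpha}\right)$. Both the equivalence and the radius formula therefore concern only which closed balls around $\w_{\mathrm{task}}$ fit inside $\W\left(\boldsymbol{\alpha}\right)$.

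For the equivalence, suppose first that $\B_{\epsilon}\left(\w_{\mathrm{task}}\right)\subseteq\W$ with $\epsilon>0$. The closed ball contains the open ball of the same radius, so $\w_{\mathrm{task}}$ is interior. This is also the contrapositive of \Cref{prop:rank_not_enough}. Conversely, if $\w_{\mathrm{task}}\in\intset\W$, some open ball of radius $r>0$ lies in $\W$. Then the closed ball of radius $\epsilon=r/2$ lies in $\W$, which settles the first claim.

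For the exact radius, set $d:=\dist\left(\w_{\mathrm{task}},\bd\W\right)$. The boundary is nonempty, since $\W$ is compact by \Cref{lem:geometry} and $\R^{6}$ is connected. It is also closed, so the distance is attained. I would prove two inequalities.
\begin{itemize}
\item \textbf{Lower bound.} Take any $\epsilon\le d$ and any $\z$ with $\left\|\z-\w_{\mathrm{task}}\right\|\le\epsilon$, and suppose for contradiction that $\z\notin\W$. Consider the segment from $\w_{\mathrm{task}}\in\W$ to $\z$, and let $t^{*}=\sup\{t\in[0,1]:\w_{\mathrm{task}}+t(\z-\w_{\mathrm{task}})\in\W\}$. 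Because $\W$ is closed, the point at $t^{*}$ lies in $\W$; every later point on the segment lies outside $\W$. So this point is a boundary point, and $t^{*}<1$. Its distance from $\w_{\mathrm{task}}$ is $t^{*}\left\|\z-\w_{\mathrm{task}}\right\|<\epsilon\le d$, contradicting the definition of $d$. Hence the closed ball of radius $d$ is contained in $\W$.
\item \textbf{Upper bound.} Take a nearest boundary point $\mathbf{b}$, so $\left\|\mathbf{b}-\w_{\mathrm{task}}\right\|=d$. Any closed ball of radius $\epsilon>d$ contains $\mathbf{b}$ together with a neighborhood of it. By the definition of a boundary point, that neighborhood contains points outside $\W$, so the inclusion fails.
\end{itemize}
Together these show that the supremum of admissible $\epsilon$ is exactly $d$ and that it is attained. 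When $\w_{\mathrm{task}}\in\bd\W$ this gives $d=0$, consistent with the first part.

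The main obstacle is the boundary and closedness bookkeeping. The ball is closed, so one must check that the supremum radius is attained and not just approached, which is why the lower bound is run with $\epsilon\le d$ rather than $\epsilon<d$. One must also confirm that the paper's $\bd\W=\W\setminus\intset\W$ agrees with the topological boundary. That holds because $\W$ is closed by \Cref{lem:geometry}. Convexity and the rank condition are not strictly needed beyond guaranteeing compactness and a nonempty interior. I would note this explicitly, so that the formula is seen to depend only on $\W$ being closed.
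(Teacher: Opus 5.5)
Your proof is correct and follows the same route as the paper: you translate $\B_{\epsilon}(\zero_{6})\subseteq\W_{\mathrm{res}}(\boldsymbol{\alpha},\w_{\mathrm{task}})$ into $\B_{\epsilon}(\w_{\mathrm{task}})\subseteq\W(\boldsymbol{\alpha})$, then identify the existence of such a ball with interiority and the largest radius with $\dist(\w_{\mathrm{task}},\bd\W(\boldsymbol{\alpha}))$. The paper only asserts that last step, whereas you prove it (the segment argument for the lower bound, the nearest boundary point for the upper bound, and attainment of the supremum for the closed ball), which tightens the argument rather than changing it.
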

\begin{proof}
From \eqref{eq:residual_set}, $\B_{\epsilon}\left(\zero_{6}\right)\subseteq\W_{\mathrm{res}}\left(\boldsymbol{\alpha},\w_{\mathrm{task}}\right)$ if and only if $\w_{\mathrm{task}}+\B_{\epsilon}\left(\zero_{6}\right)\subseteq\W\left(\boldsymbol{\alpha}\right)$. Equivalently, $\B_{\epsilon}\left(\w_{\mathrm{task}}\right)\subseteq\W\left(\boldsymbol{\alpha}\right)$. Such an $\epsilon>0$ exists exactly when $\w_{\mathrm{task}}$ is an interior point, and the largest such $\epsilon$ is the distance from $\w_{\mathrm{task}}$ to the boundary.
\end{proof}

If $\W\left(\boldsymbol{\alpha}\right)$ has the half-space representation
\begin{align}
    \W\left(\boldsymbol{\alpha}\right)=\left\{\w\in\R^{6}:\mathbf{a}_{j}^{\top}\w\leq b_{j},\;j=1,\ldots,q\right\},
    \label{eq:h_rep}
\end{align}
then \eqref{eq:exact_margin} becomes
\begin{align}
    \epsilon^{\star}\left(\boldsymbol{\alpha},\w_{\mathrm{task}}\right)=\min_{j\in\{1,\ldots,q\}}\dfrac{b_{j}-\mathbf{a}_{j}^{\top}\w_{\mathrm{task}}}{\left\|\mathbf{a}_{j}\right\|}.
    \label{eq:h_margin}
\end{align}
Thus, contact-rich actuation can be certified by a distance-to-boundary computation. For later use, it is useful to define a signed version of this margin. In particular, if \eqref{eq:h_rep} is available, define
\begin{align}
    \varphi_{\W}\left(\boldsymbol{\alpha},\w\right)
    :=
    \min_{j\in\{1,\ldots,q\}}
    \dfrac{b_{j}-\mathbf{a}_{j}^{\top}\w}{\left\|\mathbf{a}_{j}\right\|}.
    \label{eq:signed_margin_w}
\end{align}
Then $\varphi_{\W}\left(\boldsymbol{\alpha},\w\right)>0$ if and only if $\w\in\intset\left(\W\left(\boldsymbol{\alpha}\right)\right)$, while $\varphi_{\W}\left(\boldsymbol{\alpha},\w\right)<0$ means that at least one actuator-induced half-space constraint is violated. Hence, unlike the unsigned distance $\dist\left(\w,\bd\W\left(\boldsymbol{\alpha}\right)\right)$, the signed margin does not assign a positive value to infeasible exterior points.

\begin{corollary}[Signed-Margin Robust Ball]
\label{cor:signed_margin_ball}
Assume that $\W\left(\boldsymbol{\alpha}\right)$ admits the half-space representation \eqref{eq:h_rep}. For any $\w_{0}\in\R^{6}$ and any $\epsilon\geq0$,
\begin{align}
\B_{\epsilon}\left(\w_{0}\right)\subseteq\W\left(\boldsymbol{\alpha}\right)
~~\Longleftarrow~~
\varphi_{\W}\left(\boldsymbol{\alpha},\w_{0}\right)\geq\epsilon.
\label{eq:signed_ball_condition}
\end{align}
Moreover, if $\w_{0}\in\W\left(\boldsymbol{\alpha}\right)$, then the largest admissible Euclidean ball centered at $\w_{0}$ has radius $\max\left\{0,\varphi_{\W}\left(\boldsymbol{\alpha},\w_{0}\right)\right\}$.
\end{corollary}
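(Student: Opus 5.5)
The argument works directly with the half-space representation \eqref{eq:h_rep}. We may assume each $\mathbf{a}_{j}\neq\zero_{6}$, since otherwise the quotient in \eqref{eq:signed_margin_w} is undefined. Redundant rows with $\mathbf{a}_{j}=\zero_{6}$ and $b_{j}\geq0$ can be dropped, because $\W\left(\boldsymbol{\alpha}\right)$ is nonempty. The basic tool is the exact support-function identity
\begin{align*}
\max_{\zeta\in\B_{\epsilon}\left(\zero_{6}\right)}\mathbf{a}_{j}^{\top}\left(\w_{0}+\zeta\right)=\mathbf{a}_{j}^{\top}\w_{0}+\epsilon\left\|\mathbf{a}_{j}\right\|,
\end{align*}
which follows from Cauchy--Schwarz, with equality at $\zeta=\epsilon\,\mathbf{a}_{j}/\left\|\mathbf{a}_{j}\right\|$. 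Hence $\B_{\epsilon}\left(\w_{0}\right)\subseteq\W\left(\boldsymbol{\alpha}\right)$ holds if and only if $\mathbf{a}_{j}^{\top}\w_{0}+\epsilon\left\|\mathbf{a}_{j}\right\|\leq b_{j}$ for every $j$. Dividing the $j$th inequality by $\left\|\mathbf{a}_{j}\right\|>0$ and taking the minimum over $j$, this is the same as $\varphi_{\W}\left(\boldsymbol{\alpha},\w_{0}\right)\geq\epsilon$.

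This equivalence already gives \eqref{eq:signed_ball_condition}, and it holds for every $\w_{0}\in\R^{6}$ with no feasibility hypothesis. If $\w_{0}$ is infeasible, then $\varphi_{\W}<0\leq\epsilon$, so the implication is vacuous, which matches the intended signed behaviour. The case $\epsilon=0$ simply reads: $\w_{0}\in\W\left(\boldsymbol{\alpha}\right)$ if and only if $\varphi_{\W}\geq0$.

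For the second claim, suppose $\w_{0}\in\W\left(\boldsymbol{\alpha}\right)$. Then $\varphi_{\W}\left(\boldsymbol{\alpha},\w_{0}\right)\geq0$, so $\max\{0,\varphi_{\W}\}=\varphi_{\W}$. By the equivalence, the set of admissible radii is $\{\epsilon\geq0:\epsilon\leq\varphi_{\W}\}=\left[0,\varphi_{\W}\right]$. This interval is closed, so its maximum is attained and equals $\varphi_{\W}\left(\boldsymbol{\alpha},\w_{0}\right)=\max\{0,\varphi_{\W}\}$.

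As a consistency check, this should coincide with \Cref{thm:main}, namely $\varphi_{\W}=\dist\left(\w_{0},\bd\W\left(\boldsymbol{\alpha}\right)\right)$ for feasible $\w_{0}$, which is the formula \eqref{eq:h_margin}. The only point requiring care is the converse direction of the equivalence, since only that direction is needed for the maximality claim. It relies on choosing the extremal $\zeta$ aligned with $\mathbf{a}_{j}$ for the minimizing index $j$, so that any $\epsilon>\varphi_{\W}$ produces a point of $\B_{\epsilon}\left(\w_{0}\right)$ violating constraint $j$. I expect no genuine obstacle beyond this attainment argument and the nondegeneracy $\mathbf{a}_{j}\neq\zero_{6}$.
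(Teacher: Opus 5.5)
Your proposal is correct and follows the paper's route. Both bound each half-space constraint over the ball via Cauchy--Schwarz, $\mathbf{a}_{j}^{\top}(\w_{0}+\Delta\w)\leq\mathbf{a}_{j}^{\top}\w_{0}+\epsilon\left\|\mathbf{a}_{j}\right\|$, and then minimize the normalized slacks. Your equality case $\zeta=\epsilon\,\mathbf{a}_{j}/\left\|\mathbf{a}_{j}\right\|$ turns the implication into an equivalence and so proves the maximal-radius claim, which the paper only asserts with ``follows by minimizing the normalized half-space slack.''
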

\begin{proof}
For any point $\w_{0}+\Delta\w$ with $\left\|\Delta\w\right\|\leq\epsilon$, each half-space constraint satisfies $\mathbf{a}_{j}^{\top}\left(\w_{0}+\Delta\w\right)\leq \mathbf{a}_{j}^{\top}\w_{0}+\left\|\mathbf{a}_{j}\right\|\epsilon$. If \eqref{eq:signed_ball_condition} holds, then $\mathbf{a}_{j}^{\top}\w_{0}+\left\|\mathbf{a}_{j}\right\|\epsilon\leq b_{j}$ for all $j$, so the ball is contained in $\W\left(\boldsymbol{\alpha}\right)$. The maximal-radius statement follows by minimizing the normalized half-space slack.
\end{proof}
The exact boundary of $\W\left(\boldsymbol{\alpha}\right)$ may be expensive to construct for large overactuated systems. The next result gives a computable lower bound directly from actuator slack.
\begin{theorem}[Interior Allocation Certificate]
\label{thm:certificate}
Let $\mathbf{R}_{M}\in\R^{N\times6}$ satisfy $\M\left(\boldsymbol{\alpha}\right)\mathbf{R}_{M}=\mathbf{I}_{6}$. Suppose there exists $\boldsymbol{\lambda}_{0}\in\intset\left(\Lambda\right)$ such that $\M\left(\boldsymbol{\alpha}\right)\boldsymbol{\lambda}_{0}=\w_{\mathrm{task}}$. Define
\begin{align}
    \delta_{\infty}\left(\boldsymbol{\lambda}_{0}\right)=\min_{i}\left\{\lambda_{0,i}-\lambda_{i,\min},\;\lambda_{i,\max}-\lambda_{0,i}\right\}.
\end{align}
Then the UAV is contact-persistently fully actuated with a margin of at least
\begin{align}
\epsilon_{\mathrm{cert}}=\dfrac{\delta_{\infty}\left(\boldsymbol{\lambda}_{0}\right)}{\left\|\mathbf{R}_{M}\right\|_{2\rightarrow\infty}},
    \label{eq:certificate_margin}
\end{align}
where $\left\|\mathbf{R}_{M}\right\|_{2\rightarrow\infty}:=\sup_{\zeta\neq\zero}\left\|\mathbf{R}_{M}\zeta\right\|_{\infty}/\left\|\zeta\right\|$.
\end{theorem}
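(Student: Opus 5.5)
The plan is to construct, for every perturbation $\Delta\w$ with $\left\|\Delta\w\right\|\leq\epsilon_{\mathrm{cert}}$, an explicit admissible actuator vector that realizes $\w_{\mathrm{task}}+\Delta\w$. This shows directly that $\B_{\epsilon_{\mathrm{cert}}}\left(\zero_{6}\right)\subseteq\W_{\mathrm{res}}\left(\boldsymbol{\alpha},\w_{\mathrm{task}}\right)$, which is the definition of contact-persistent full actuation with margin $\epsilon_{\mathrm{cert}}$. The natural candidate is the perturbed allocation $\boldsymbol{\lambda}=\boldsymbol{\lambda}_{0}+\mathbf{R}_{M}\Delta\w$. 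Note that a right inverse $\mathbf{R}_{M}$ exists under \Cref{ass:rank} (e.g., the Moore--Penrose pseudoinverse), although the statement takes it as given.

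First, I will check that the wrench is realized: by linearity and $\M\left(\boldsymbol{\alpha}\right)\mathbf{R}_{M}=\mathbf{I}_{6}$,
\begin{align}
\M\left(\boldsymbol{\alpha}\right)\boldsymbol{\lambda}=\M\left(\boldsymbol{\alpha}\right)\boldsymbol{\lambda}_{0}+\M\left(\boldsymbol{\alpha}\right)\mathbf{R}_{M}\Delta\w=\w_{\mathrm{task}}+\Delta\w .
\end{align}
Second, I will check actuator feasibility componentwise. By the definition of the induced norm, $\left\|\mathbf{R}_{M}\Delta\w\right\|_{\infty}\leq\left\|\mathbf{R}_{M}\right\|_{2\rightarrow\infty}\left\|\Delta\w\right\|\leq\left\|\mathbf{R}_{M}\right\|_{2\rightarrow\infty}\epsilon_{\mathrm{cert}}=\delta_{\infty}\left(\boldsymbol{\lambda}_{0}\right)$. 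Hence each entry satisfies $\left|\left(\mathbf{R}_{M}\Delta\w\right)_{i}\right|\leq\delta_{\infty}\leq\min\left\{\lambda_{0,i}-\lambda_{i,\min},\lambda_{i,\max}-\lambda_{0,i}\right\}$. This gives $\lambda_{i,\min}\leq\lambda_{0,i}+\left(\mathbf{R}_{M}\Delta\w\right)_{i}\leq\lambda_{i,\max}$, so $\boldsymbol{\lambda}\in\Lambda$. Therefore $\w_{\mathrm{task}}+\Delta\w\in\W\left(\boldsymbol{\alpha}\right)$, i.e., $\Delta\w\in\W_{\mathrm{res}}$.

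The only genuinely delicate point is that $\epsilon_{\mathrm{cert}}$ must be a well-defined positive number. Positivity of $\delta_{\infty}$ follows from $\boldsymbol{\lambda}_{0}\in\intset\left(\Lambda\right)$, which for a box means strict inequalities in every component. The denominator $\left\|\mathbf{R}_{M}\right\|_{2\rightarrow\infty}$ is finite, and it is nonzero because $\mathbf{R}_{M}\neq\zero$ (otherwise $\M\mathbf{R}_{M}=\mathbf{I}_{6}$ fails). Hence $\epsilon_{\mathrm{cert}}>0$. As a consistency check, \Cref{thm:main} then gives $\epsilon_{\mathrm{cert}}\leq\epsilon^{\star}\left(\boldsymbol{\alpha},\w_{\mathrm{task}}\right)$, so the certificate is indeed a lower bound on the exact residual authority radius.
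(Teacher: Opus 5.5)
Your proposal is correct and is the paper's proof: perturb the allocation to $\boldsymbol{\lambda}_{0}+\mathbf{R}_{M}\Delta\w$, use $\M\left(\boldsymbol{\alpha}\right)\mathbf{R}_{M}=\mathbf{I}_{6}$ to realize $\w_{\mathrm{task}}+\Delta\w$, and use the induced-norm bound $\left\|\mathbf{R}_{M}\Delta\w\right\|_{\infty}\leq\delta_{\infty}\left(\boldsymbol{\lambda}_{0}\right)$ to stay in $\Lambda$. Your added check that $\epsilon_{\mathrm{cert}}$ is well defined and strictly positive (from $\boldsymbol{\lambda}_{0}\in\intset\left(\Lambda\right)$ and $\mathbf{R}_{M}\neq\zero$) is a detail the paper leaves implicit.
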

\begin{proof}
For any $\Delta\w$ satisfying $\left\|\Delta\w\right\|\leq\epsilon_{\mathrm{cert}}$, set $\Delta\boldsymbol{\lambda}=\mathbf{R}_{M}\Delta\w$. Then $\M\left(\boldsymbol{\alpha}\right)\left(\boldsymbol{\lambda}_{0}+\Delta\boldsymbol{\lambda}\right)=\w_{\mathrm{task}}+\Delta\w$. Moreover, $\left\|\Delta\boldsymbol{\lambda}\right\|_{\infty}\leq\delta_{\infty}\left(\boldsymbol{\lambda}_{0}\right)$, so $\boldsymbol{\lambda}_{0}+\Delta\boldsymbol{\lambda}\in\Lambda$. Hence $\w_{\mathrm{task}}+\Delta\w\in\W\left(\boldsymbol{\alpha}\right)$ for all such $\Delta\w$.
\end{proof}

\begin{corollary}[Slack-Maximizing Allocation Certificate]
\label{cor:slack_max}
For a fixed morphology and a desired task wrench $\w_{\mathrm{task}}$, consider
\begin{align}
\left(\boldsymbol{\lambda}^{\star},\delta^{\star}\right)
    \in
    \argmax_{\boldsymbol{\lambda},\delta}\;&\delta
    \label{eq:slack_lp}\\
    \mathrm{s.t.}\;&\M\left(\boldsymbol{\alpha}\right)\boldsymbol{\lambda}=\w_{\mathrm{task}},\nonumber\\
    &\boldsymbol{\lambda}_{\min}+\delta\one\leq\boldsymbol{\lambda}
    \leq
    \boldsymbol{\lambda}_{\max}-\delta\one,\nonumber\\
    &\delta\geq0.\nonumber
\end{align}
The problem is a linear program. If $\delta^{\star}>0$, then $\w_{\mathrm{task}}\in\intset\left(\W\left(\boldsymbol{\alpha}\right)\right)$ and the UAV is contact-persistently fully actuated. In addition, for any right inverse $\mathbf{R}_{M}$ of $\M\left(\boldsymbol{\alpha}\right)$, the allocation $\boldsymbol{\lambda}^{\star}$ certifies the residual margin
\begin{align}
\epsilon_{\mathrm{slack}}
    =
    \dfrac{\delta^{\star}}{\left\|\mathbf{R}_{M}\right\|_{2\rightarrow\infty}}.
    \label{eq:slack_cert}
\end{align}
\end{corollary}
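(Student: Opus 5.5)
The plan is to reduce the corollary to \Cref{thm:certificate} and \Cref{thm:main}, after first checking that \eqref{eq:slack_lp} is indeed a linear program.

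\textbf{Linear-program structure.} The decision variables are $(\boldsymbol{\lambda},\delta)\in\R^{N+1}$ and the objective $\delta$ is linear. The constraints $\M(\boldsymbol{\alpha})\boldsymbol{\lambda}=\w_{\mathrm{task}}$ are affine equalities. Each of $\boldsymbol{\lambda}_{\min}+\delta\one\leq\boldsymbol{\lambda}$, $\boldsymbol{\lambda}\leq\boldsymbol{\lambda}_{\max}-\delta\one$ and $\delta\geq0$ is an affine inequality in $(\boldsymbol{\lambda},\delta)$, so the problem is an LP.

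\textbf{Existence of a maximizer.} The argmax is meaningful whenever the LP is feasible. The constraints force $2\delta\leq\lambda_{i,\max}-\lambda_{i,\min}$ for every $i$, so $\delta$ is bounded above. A feasible LP that is bounded above attains its optimum, so $(\boldsymbol{\lambda}^{\star},\delta^{\star})$ exists.

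\textbf{Main step.} Assume $\delta^{\star}>0$. Feasibility gives $\lambda^{\star}_{i}-\lambda_{i,\min}\geq\delta^{\star}>0$ and $\lambda_{i,\max}-\lambda^{\star}_{i}\geq\delta^{\star}>0$ for every $i$. Hence $\boldsymbol{\lambda}^{\star}\in\intset(\Lambda)$, $\M(\boldsymbol{\alpha})\boldsymbol{\lambda}^{\star}=\w_{\mathrm{task}}$, and $\delta_{\infty}(\boldsymbol{\lambda}^{\star})\geq\delta^{\star}$.

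A right inverse $\mathbf{R}_{M}$ exists by \Cref{ass:rank}, for instance the Moore--Penrose pseudoinverse $\M^{\top}(\M\M^{\top})^{-1}$. Since $\M\mathbf{R}_{M}=\mathbf{I}_{6}$, the matrix $\mathbf{R}_{M}$ is nonzero, so $\|\mathbf{R}_{M}\|_{2\rightarrow\infty}>0$ and \eqref{eq:slack_cert} is well defined. Applying \Cref{thm:certificate} with $\boldsymbol{\lambda}_{0}=\boldsymbol{\lambda}^{\star}$ gives contact-persistent full actuation with margin $\delta_{\infty}(\boldsymbol{\lambda}^{\star})/\|\mathbf{R}_{M}\|_{2\rightarrow\infty}$. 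This is at least $\epsilon_{\mathrm{slack}}$. A smaller ball is contained in a larger one, so the margin $\epsilon_{\mathrm{slack}}$ is also certified.

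Because $\epsilon_{\mathrm{slack}}>0$, we have $\B_{\epsilon_{\mathrm{slack}}}(\zero_{6})\subseteq\W_{\mathrm{res}}(\boldsymbol{\alpha},\w_{\mathrm{task}})$. In particular $\w_{\mathrm{task}}\in\W(\boldsymbol{\alpha})$, so the standing hypothesis of \Cref{thm:main} holds. The equivalence in \Cref{thm:main} then yields $\w_{\mathrm{task}}\in\intset(\W(\boldsymbol{\alpha}))$. Alternatively, the ball $\B_{\epsilon_{\mathrm{slack}}}(\w_{\mathrm{task}})\subseteq\W(\boldsymbol{\alpha})$ shows interiority directly.

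\textbf{Expected obstacle.} There is no real difficulty here. The only care needed is to note that $\delta_{\infty}(\boldsymbol{\lambda}^{\star})$ may strictly exceed $\delta^{\star}$, which only strengthens the bound. One should also confirm that the certified margin holds for every right inverse and not just the pseudoinverse. This is automatic, because \Cref{thm:certificate} is stated for an arbitrary $\mathbf{R}_{M}$ with $\M(\boldsymbol{\alpha})\mathbf{R}_{M}=\mathbf{I}_{6}$.
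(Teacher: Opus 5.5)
Your proposal is correct and follows the same route as the paper. Both arguments read $\delta^{\star}\leq\delta_{\infty}(\boldsymbol{\lambda}^{\star})$ off the LP constraints, conclude $\boldsymbol{\lambda}^{\star}\in\intset(\Lambda)$ when $\delta^{\star}>0$, and apply \Cref{thm:certificate} with $\boldsymbol{\lambda}_{0}=\boldsymbol{\lambda}^{\star}$. Your extra checks (LP structure, attainment of the optimum, existence of a right inverse under \Cref{ass:rank}, the monotone passage from $\delta_{\infty}(\boldsymbol{\lambda}^{\star})$ to $\delta^{\star}$, and interiority via \Cref{thm:main}) spell out what the paper's three-line proof leaves implicit.
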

\begin{proof}
The constraints of \eqref{eq:slack_lp} enforce $\delta^{\star}\leq\delta_{\infty}\left(\boldsymbol{\lambda}^{\star}\right)$. If $\delta^{\star}>0$, then $\boldsymbol{\lambda}^{\star}\in\intset\left(\Lambda\right)$ and $\M\left(\boldsymbol{\alpha}\right)\boldsymbol{\lambda}^{\star}=\w_{\mathrm{task}}$. Applying \Cref{thm:certificate} with $\boldsymbol{\lambda}_{0}=\boldsymbol{\lambda}^{\star}$ gives \eqref{eq:slack_cert}.
\end{proof}
Let $\T\subset\R^{6}$ be a compact set of task wrenches. The UAV is uniformly contact-persistently fully actuated over $\T$ with margin $\epsilon>0$ if $\B_{\epsilon}\left(\zero_{6}\right)\subseteq\W_{\mathrm{res}}\left(\boldsymbol{\alpha},\w\right)$ for all $\w\in\T$.
\begin{theorem}[Uniform Residual Authority]
\label{thm:uniform}
For compact $\T$, a positive uniform residual margin exists if and only if $\T\subset\intset\left(\W\left(\boldsymbol{\alpha}\right)\right)$. The largest uniform margin is
\begin{align}
\epsilon_{\T}^{\star}\left(\boldsymbol{\alpha}\right)=\dist\left(\T,\bd\W\left(\boldsymbol{\alpha}\right)\right).
    \label{eq:uniform_margin}
\end{align}
\end{theorem}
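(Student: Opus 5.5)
The plan is to reduce the statement pointwise to \Cref{thm:main} and then pass to the infimum over $\T$, using compactness only where it matters.

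First, I would fix $\epsilon>0$ and unpack the definition. Uniform contact-persistent full actuation with margin $\epsilon$ means $\B_{\epsilon}\left(\zero_{6}\right)\subseteq\W_{\mathrm{res}}\left(\boldsymbol{\alpha},\w\right)$ for every $\w\in\T$. As in the proof of \Cref{thm:main}, this is equivalent to $\B_{\epsilon}\left(\w\right)\subseteq\W\left(\boldsymbol{\alpha}\right)$ for every $\w\in\T$. For $\w\in\W\left(\boldsymbol{\alpha}\right)$, \Cref{thm:main} says this holds exactly when $\epsilon\leq\dist\left(\w,\bd\W\left(\boldsymbol{\alpha}\right)\right)$. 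Every $\w\in\T$ must lie in $\W\left(\boldsymbol{\alpha}\right)$, because it is the center of a ball contained in that set. Hence the admissible uniform margins are exactly
\begin{align*}
0<\epsilon\leq\inf_{\w\in\T}\dist\left(\w,\bd\W\left(\boldsymbol{\alpha}\right)\right)=\dist\left(\T,\bd\W\left(\boldsymbol{\alpha}\right)\right),
\end{align*}
provided $\T\subseteq\W\left(\boldsymbol{\alpha}\right)$. This yields \eqref{eq:uniform_margin}, with the maximum attained, since the ball uses the closed norm inequality and the condition is a nonstrict inequality. I would also remark that if $\T$ contains a point outside $\W\left(\boldsymbol{\alpha}\right)$, no margin exists.

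Second, I would prove the ``if and only if''. For necessity, a positive uniform margin puts every $\w\in\T$ at the center of a ball inside $\W\left(\boldsymbol{\alpha}\right)$, so $\T\subset\intset\left(\W\left(\boldsymbol{\alpha}\right)\right)$ by \Cref{thm:main}.

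Sufficiency is the main obstacle: pointwise interiority gives a positive distance at each point, but the infimum could be zero. Here I would use compactness. The function $\w\mapsto\dist\left(\w,\bd\W\left(\boldsymbol{\alpha}\right)\right)$ is $1$-Lipschitz, hence continuous. The boundary is nonempty and closed by \Cref{lem:geometry}, since $\W\left(\boldsymbol{\alpha}\right)$ is compact and not all of $\R^{6}$. The function is therefore strictly positive on the compact set $\T\subset\intset\left(\W\left(\boldsymbol{\alpha}\right)\right)$, because an interior point has positive distance to the closed boundary. It then attains a positive minimum on $\T$.

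I would close by noting two degenerate cases. If $\T=\emptyset$, the statement is vacuous. Without compactness of $\T$, sufficiency can fail, for example when $\T$ is an open subset accumulating at the boundary.
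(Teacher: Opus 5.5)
Your proof is correct and follows the same route as the paper. You reduce pointwise to \Cref{thm:main}, then use compactness of $\T$ and closedness of $\bd\W\left(\boldsymbol{\alpha}\right)$ to get a positive distance, and identify that distance as the largest uniform ball radius. You also spell out steps the paper's three-line proof leaves implicit: the $1$-Lipschitz continuity of the distance function, attainment of the minimum, and the tacit requirement $\T\subseteq\W\left(\boldsymbol{\alpha}\right)$ without which \eqref{eq:uniform_margin} is not meaningful.
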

\begin{proof}
Pointwise existence follows from \Cref{thm:main}. If $\T\subset\intset\left(\W\left(\boldsymbol{\alpha}\right)\right)$ and $\T$ is compact, then $\T$ has positive distance from the closed set $\bd\W\left(\boldsymbol{\alpha}\right)$. This distance is precisely the largest uniform ball radius that can be placed around every $\w\in\T$ while remaining in $\W\left(\boldsymbol{\alpha}\right)$.
\end{proof}
The margin \eqref{eq:uniform_margin} has a direct closed-loop interpretation. Suppose a high-level controller requests
\begin{align}
    \w_{d}\left(t\right)=\w_{\mathrm{task}}\left(t\right)+\w_{\mathrm{stab}}\left(t\right),
    \label{eq:wrench_decomp}
\end{align}
where $\w_{\mathrm{stab}}$ is the stabilization and disturbance-rejection wrench. If $\w_{\mathrm{task}}\left(t\right)\in\T$ and $\left\|\w_{\mathrm{stab}}\left(t\right)\right\|<\epsilon_{\T}^{\star}\left(\boldsymbol{\alpha}\right)$ for all $t$, then $\w_{d}\left(t\right)\in\W\left(\boldsymbol{\alpha}\right)$ for all $t$. Thus, residual authority gives a controller-agnostic feasibility layer. Any pose or force controller whose stabilizing wrench remains inside the residual ball is implementable without violating actuator bounds.

\begin{corollary}[Robust Implementability and Margin-Aware Filtering]
\label{cor:robust_filter}
Assume that $\W\left(\boldsymbol{\alpha}\right)$ admits \eqref{eq:h_rep}. Let $\bar{w}_{\mathrm{stab}}\geq0$ and $\bar{w}_{\mathrm{unc}}\geq0$ be bounds on the stabilization wrench and the unmodeled interaction wrench, respectively. If
\begin{align}
\inf_{\w\in\T}\varphi_{\W}\left(\boldsymbol{\alpha},\w\right)
    \geq
    \bar{w}_{\mathrm{stab}}+\bar{w}_{\mathrm{unc}},
    \label{eq:robust_margin_condition}
\end{align}
then every commanded wrench of the form
\begin{align}
\w_{d}
    =
    \w_{\mathrm{task}}+\w_{\mathrm{stab}}+\w_{\mathrm{unc}};
    ~~
    \w_{\mathrm{task}}\in\T,
\end{align}
with $\left\|\w_{\mathrm{stab}}\right\|\leq\bar{w}_{\mathrm{stab}}$ and $\left\|\w_{\mathrm{unc}}\right\|\leq\bar{w}_{\mathrm{unc}}$ belongs to $\W\left(\boldsymbol{\alpha}\right)$. Furthermore, a desired task command $\w_{\mathrm{cmd}}$ can be filtered by the convex program
\begin{align}
\w_{\mathrm{cmd}}^{\star}
    \in
    \argmin_{\w\in\mathcal{C}}\;&\left\|\w-\w_{\mathrm{cmd}}\right\|^{2}
    \label{eq:margin_filter}\\
    \mathrm{s.t.}\;&
    \mathbf{a}_{j}^{\top}\w
    \leq
    b_{j}-\epsilon_{\mathrm{req}}\left\|\mathbf{a}_{j}\right\|,
    \quad j=1,\ldots,q,\nonumber
\end{align}
where $\epsilon_{\mathrm{req}}=\bar{w}_{\mathrm{stab}}+\bar{w}_{\mathrm{unc}}$ and $\mathcal{C}$ is any convex set encoding task-level contact-force limits. The filter projects the desired wrench onto the set of commands that preserve the required stabilizing and uncertainty margin.
\end{corollary}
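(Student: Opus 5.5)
The plan is to reduce the robust implementability claim to \Cref{cor:signed_margin_ball}, applied once for each task wrench. Then we read off the filter claim directly from the same half-space inequalities.

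\textbf{Step 1 (robust implementability).} Fix $\w_{\mathrm{task}}\in\T$, and let $\w_{\mathrm{stab}}$ and $\w_{\mathrm{unc}}$ satisfy the stated norm bounds. Set $\Delta\w=\w_{\mathrm{stab}}+\w_{\mathrm{unc}}$. By the triangle inequality, $\left\|\Delta\w\right\|\leq\bar{w}_{\mathrm{stab}}+\bar{w}_{\mathrm{unc}}=\epsilon_{\mathrm{req}}$. By \eqref{eq:robust_margin_condition} and the definition of the infimum, $\varphi_{\W}\left(\boldsymbol{\alpha},\w_{\mathrm{task}}\right)\geq\epsilon_{\mathrm{req}}$. \Cref{cor:signed_margin_ball} with $\w_{0}=\w_{\mathrm{task}}$ and $\epsilon=\epsilon_{\mathrm{req}}\geq0$ then gives $\B_{\epsilon_{\mathrm{req}}}\left(\w_{\mathrm{task}}\right)\subseteq\W\left(\boldsymbol{\alpha}\right)$. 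Hence $\w_{d}=\w_{\mathrm{task}}+\Delta\w\in\W\left(\boldsymbol{\alpha}\right)$. This step uses only the half-space form \eqref{eq:h_rep}, so compactness of $\T$ is not needed, and the signed margin also covers the degenerate case $\epsilon_{\mathrm{req}}=0$.

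\textbf{Step 2 (the filter).} The feasible set of \eqref{eq:margin_filter} is the intersection of $\mathcal{C}$ with finitely many closed half-spaces $\{\w:\mathbf{a}_{j}^{\top}\w\leq b_{j}-\epsilon_{\mathrm{req}}\|\mathbf{a}_{j}\|\}$. It is therefore convex, and the objective is strictly convex, so the program is convex. If $\mathcal{C}$ is also closed and the feasible set is nonempty, the minimizer exists and is unique; it is the Euclidean projection of $\w_{\mathrm{cmd}}$ onto that set. For any feasible $\w$, rearranging the constraints gives $(b_{j}-\mathbf{a}_{j}^{\top}\w)/\|\mathbf{a}_{j}\|\geq\epsilon_{\mathrm{req}}$ for every $j$, that is, $\varphi_{\W}\left(\boldsymbol{\alpha},\w\right)\geq\epsilon_{\mathrm{req}}$. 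Applying Step 1 with $\T$ replaced by the singleton $\{\w_{\mathrm{cmd}}^{\star}\}$ shows that $\w_{\mathrm{cmd}}^{\star}+\w_{\mathrm{stab}}+\w_{\mathrm{unc}}\in\W\left(\boldsymbol{\alpha}\right)$. This is exactly the claim that the filtered command preserves the required margin.

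\textbf{Main obstacle.} The only delicate point is interpretive. The statement asserts that the filter ``projects'', which presupposes that the tightened set is nonempty and closed. I would handle this by stating that assumption explicitly, noting that \eqref{eq:robust_margin_condition} with $\T\cap\mathcal{C}\neq\emptyset$ is a sufficient condition for it. I would also remark that when $\mathcal{C}$ is not closed, only an infimum, rather than a minimizer, is guaranteed.
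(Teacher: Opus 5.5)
Your proposal is correct and follows essentially the same route as the paper. Step~1 is the paper's combination of \Cref{cor:signed_margin_ball} with the triangle inequality, Step~2 matches its observation that the constraints of \eqref{eq:margin_filter} are affine tightenings of \eqref{eq:h_rep} forcing $\varphi_{\W}(\boldsymbol{\alpha},\w_{\mathrm{cmd}}^{\star})\geq\epsilon_{\mathrm{req}}$, and your added care about existence and uniqueness of the projection (closedness of $\mathcal{C}$, nonemptiness via $\T\cap\mathcal{C}\neq\emptyset$) goes slightly beyond what the paper checks without changing the argument.
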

\begin{proof}
Condition \eqref{eq:robust_margin_condition} implies that the ball of radius $\bar{w}_{\mathrm{stab}}+\bar{w}_{\mathrm{unc}}$ centered at each $\w\in\T$ lies in $\W\left(\boldsymbol{\alpha}\right)$ by \Cref{cor:signed_margin_ball}. The feasibility of all admissible $\w_{d}$ follows from the triangle inequality. The constraints in \eqref{eq:margin_filter} are affine half-space tightenings of \eqref{eq:h_rep}. Hence, the filtered command satisfies $\varphi_{\W}\left(\boldsymbol{\alpha},\w_{\mathrm{cmd}}^{\star}\right)\geq\epsilon_{\mathrm{req}}$.
\end{proof}

\section{Morphology-Aware Evaluation}
\label{sec:morphology}
Force and torque have different units, so numerical comparison is performed in normalized wrench coordinates. Let $F_{0}>0$ and $\ell_{0}>0$ be characteristic force and length scales, and define $\mathbf{D}_{w}=\blkdiag\left(\dfrac{1}{F_{0}}\mathbf{I}_{3},\dfrac{1}{\ell_{0}F_{0}}\mathbf{I}_{3}\right)$ and $\z=\mathbf{D}_{w}\w$. The normalized allocation map and wrench set are $\z=\A\left(\boldsymbol{\alpha}\right)\boldsymbol{\lambda}$ and $\A\left(\boldsymbol{\alpha}\right)=\mathbf{D}_{w}\M\left(\boldsymbol{\alpha}\right)$, where $\Z\left(\boldsymbol{\alpha}\right)=\left\{\z:\z=\A\left(\boldsymbol{\alpha}\right)\boldsymbol{\lambda},\;\boldsymbol{\lambda}\in\Lambda\right\}$. The normalized residual margin is evaluated using the signed margin of the normalized polytope. Thus, for feasible interior task wrenches it equals $\dist\left(\z_{\mathrm{task}},\bd\Z\left(\boldsymbol{\alpha}\right)\right)$, whereas infeasible exterior points receive negative margin. Here $\bd\Z\left(\boldsymbol{\alpha}\right)$ denotes the topological boundary of $\Z\left(\boldsymbol{\alpha}\right)$.

To represent a broad class of morphologies, consider $N$ thrust modules arranged on a circle of radius $\ell$. Let $\psi_{i}=\dfrac{2\pi\left(i-1\right)}{N}$ with $\mathbf{r}_{i}=\ell\begin{bmatrix}\cos\left(\psi_{i}\right)&\sin\left(\psi_{i}\right)&0\end{bmatrix}^{\top}$. Let the radial and tangential unit vectors be $\mathbf{e}_{r,i}=\begin{bmatrix}\cos\left(\psi_{i}\right)&\sin\left(\psi_{i}\right)&0\end{bmatrix}^{\top}$ and $\mathbf{e}_{t,i}=\begin{bmatrix}-\sin\left(\psi_{i}\right)&\cos\left(\psi_{i}\right)&0\end{bmatrix}^{\top}$, respectively. A generic tilted thrust direction is
\begin{align}
    \mathbf{b}_{i}\left(\gamma_{i},\eta_{i}\right)&=\cos\left(\gamma_{i}\right)\e_{3}
    +\sin\left(\gamma_{i}\right)\left(\cos\left(\eta_{i}\right)\mathbf{e}_{r,i}+\sin\left(\eta_{i}\right)\mathbf{e}_{t,i}\right).
    \label{eq:tilt_direction}
\end{align}
The coplanar case corresponds to $\gamma_{i}=0$. A fixed-tilt design corresponds to constant $\gamma_{i}$ and $\eta_{i}$. A variable-tilt design allows $\left(\gamma_{i},\eta_{i}\right)$ to be selected inside admissible limits.

\begin{proposition}[Coplanar Residual Deficiency]
\label{prop:coplanar}
If all thrust directions are parallel to $\e_{3}$, then $\rank\left(\M\right)\leq4$. Hence, no six-dimensional residual ball can be contained in the feasible wrench set.
\end{proposition}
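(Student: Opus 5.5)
If every $\mathbf{b}_{i}=\pm\e_{3}$ (parallel to $\e_{3}$; with $\lambda_{i}\geq0$ we may take $\mathbf{b}_{i}=\e_{3}$, but the sign does not matter), we compute the columns of $\M$ from \eqref{eq:wrench_sum_short} and show that they all lie in a fixed four-dimensional subspace of $\R^{6}$. The force part of column $i$ is $\mathbf{b}_{i}\in\operatorname{span}\{\e_{3}\}$, so its first two components vanish. The torque part is $S\left(\mathbf{r}_{i}\right)\mathbf{b}_{i}+\sigma_{i}c_{i}\mathbf{b}_{i}$. Its first two components come only from $\mathbf{r}_{i}\times\mathbf{b}_{i}$, since $\sigma_{i}c_{i}\mathbf{b}_{i}$ is along $\e_{3}$. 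Its third component is unrestricted. Every column therefore lies in
\begin{align*}
\mathcal{S}:=\left\{\w=\begin{bmatrix}\mathbf{f}^{\top}&\boldsymbol{\tau}^{\top}\end{bmatrix}^{\top}\in\R^{6}: f_{1}=f_{2}=0\right\},
\end{align*}
which has dimension $4$. Hence $\operatorname{range}\left(\M\right)\subseteq\mathcal{S}$ and $\rank\left(\M\right)\leq4$. No assumption on the positions $\mathbf{r}_{i}$ (coplanar or not) is needed: the force block alone already kills two dimensions, because $\e_{1}^{\top}\mathbf{b}_{i}=\e_{2}^{\top}\mathbf{b}_{i}=0$.

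For the second claim, note that $\W\left(\boldsymbol{\alpha}\right)=\M\Lambda\subseteq\operatorname{range}\left(\M\right)\subseteq\mathcal{S}$, a proper linear subspace of $\R^{6}$. Suppose some $\w_{\mathrm{task}}$ and $\epsilon>0$ satisfied $\B_{\epsilon}\left(\zero_{6}\right)\subseteq\W_{\mathrm{res}}\left(\boldsymbol{\alpha},\w_{\mathrm{task}}\right)$. By the first step of the proof of \Cref{thm:main}, this is equivalent to $\B_{\epsilon}\left(\w_{\mathrm{task}}\right)\subseteq\W\left(\boldsymbol{\alpha}\right)\subseteq\mathcal{S}$. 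But the ball contains $\w_{\mathrm{task}}$ and $\w_{\mathrm{task}}+\epsilon\e_{1}$, and their difference $\epsilon\e_{1}\notin\mathcal{S}$, which is a contradiction. Equivalently, $\intset\W\left(\boldsymbol{\alpha}\right)=\emptyset$, so \Cref{def:constrained_full_actuation} fails at every operating wrench.

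The only delicate point is that \Cref{ass:rank} is violated here, so \Cref{lem:geometry} and \Cref{thm:main} cannot be applied as stated. I will therefore avoid relying on full-dimensionality and use only the set-inclusion reformulation $\B_{\epsilon}\left(\zero_{6}\right)\subseteq\W_{\mathrm{res}}\iff\B_{\epsilon}\left(\w_{\mathrm{task}}\right)\subseteq\W$. That equivalence follows directly from \eqref{eq:residual_set} and needs no rank hypothesis. Beyond this, the argument is routine.
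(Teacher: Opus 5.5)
Your proof is correct and matches the paper's: the force block of every column lies in $\operatorname{span}\{\e_{3}\}$, so the range of $\M$ has at most one force dimension plus three torque dimensions, giving $\rank\left(\M\right)\leq4$. You also derive the second claim explicitly, using $\W\left(\boldsymbol{\alpha}\right)\subseteq\mathcal{S}$ together with the set-inclusion reformulation that does not need \Cref{ass:rank}; the paper leaves that step implicit.
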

\begin{proof}
All force columns lie in $\operatorname{span}\{\e_{3}\}$. The moment arms generate roll and pitch torques through $\mathbf{r}_{i}\times\e_{3}$, and drag contributes yaw torque. Therefore, the column span has at most one force dimension and three torque dimensions.
\end{proof}
For contact tasks, define the directional pushing set in normalized coordinates as
\begin{align}
    \T_{\mathrm{push}}=\left\{\mathbf{D}_{w}\left(\begin{bmatrix}mg\e_{3}\\\zero_{3}\end{bmatrix}+\begin{bmatrix}F\mathbf{n}\\\boldsymbol{\tau}_{c}\end{bmatrix}\right):F\in\left[0,F_{\max}\right],\;\left\|\boldsymbol{\tau}_{c}\right\|\leq\tau_{\max}\right\},
    \label{eq:t_push}
\end{align}
where $\mathbf{n}\in\mathbb{S}^{2}$ is the desired pushing direction. The signed task margin is
\begin{align}
    \varphi_{\mathrm{push}}^{\star}\left(\boldsymbol{\alpha}\right)
    :=
    \inf_{\z\in\T_{\mathrm{push}}}
    \varphi_{\Z}\left(\boldsymbol{\alpha},\z\right),
    \label{eq:signed_push_margin}
\end{align}
where $\varphi_{\Z}$ is the normalized analogue of \eqref{eq:signed_margin_w}. The contact-persistent margin is positive if and only if $\varphi_{\mathrm{push}}^{\star}\left(\boldsymbol{\alpha}\right)>0$, and the certified residual radius is $\epsilon_{\mathrm{push}}^{\star}\left(\boldsymbol{\alpha}\right)=\max\left\{0,\varphi_{\mathrm{push}}^{\star}\left(\boldsymbol{\alpha}\right)\right\}$. A morphology-selection problem is therefore
\begin{align}
\boldsymbol{\alpha}^{\star}\in\argmax_{\boldsymbol{\alpha}\in\Aset}\varphi_{\mathrm{push}}^{\star}\left(\boldsymbol{\alpha}\right),
    \label{eq:morph_opt}
\end{align}
subject to hover feasibility and actuator constraints. For a prescribed residual margin $\epsilon_{0}>0$, the maximum certified push force is
\begin{align}
    F_{\mathrm{push}}^{\star}=\sup\left\{F_{\max}:\varphi_{\mathrm{push}}^{\star}\left(\boldsymbol{\alpha};F_{\max},\tau_{\max}\right)\geq\epsilon_{0}\right\}.
    \label{eq:max_push}
\end{align}
This is different from maximum horizontal force since it explicitly reserves a six-dimensional residual stabilization margin.
\begin{figure}[t]
\centering
\begin{tikzpicture}[
    font=\footnotesize,
    >=stealth,
    axis/.style={->,thick,draw=cNavy},
    poly/.style={fill=cGray,draw=cTeal,thick},
    margin/.style={draw=cTeal,dashed,thick},
    callout/.style={->,thick,draw=cGold},
    task/.style={circle,fill=cGold,draw=cNavy,inner sep=1.25pt},
    labelbox/.style={fill=none,inner sep=1.5pt,text=cNavy}
]
   
    \draw[axis] (-2.45,0) -- (2.65,0)
        node[anchor=west] {$z_{1}$};
    \draw[axis] (0,-1.55) -- (0,1.60)
        node[anchor=south] {$z_{2}$};

    \coordinate (p1) at (-1.60,-0.82);
    \coordinate (p2) at ( 1.45,-0.58);
    \coordinate (p3) at ( 1.78, 0.82);
    \coordinate (p4) at (-1.22, 1.05);
    \filldraw[poly] (p1) -- (p2) -- (p3) -- (p4) -- cycle;

    \coordinate (ztask) at (0.32,0.18);
    \draw[margin] (ztask) circle (0.42);
    \node[task,label={[anchor=west,xshift=2pt,yshift=-6pt]right:{$\z_{\mathrm{task}}$}}] at (ztask) {};

    \draw[callout] (ztask) -- ++(0.42,0)
        node[midway,above,xshift=-2pt,yshift=-1pt] {$\epsilon^{\star}$};

    \node[labelbox,anchor=center] at (-0.88,0.62)
        {$\Z\!\left(\boldsymbol{\alpha}\right)$};

    \draw[callout] (-1.95,-1) -- (p1);
    \node[labelbox,anchor=north east,align=right] at (-1.95,-1)
        {$\bd{\Z}\!\left(\boldsymbol{\alpha}\right)$};

    \node[labelbox,anchor=south west,align=left,xshift=-10pt] at (0.80,0.48)
        {$\B_{\epsilon^{\star}}\!\left(\z_{\mathrm{task}}\right)$};
\end{tikzpicture}
\caption{Conceptual illustration of residual authority in a two-dimensional projection of the normalized wrench polytope. The task wrench is contact-persistently feasible only if a nonzero neighborhood of radius $\epsilon^{\star}$ remains inside $\Z\!\left(\boldsymbol{\alpha}\right)$.}
\label{fig:concept}
\end{figure}

The residual authority margin can be computed in several equivalent ways depending on the representation of the feasible wrench set. If an $H$-representation of $\Z\left(\boldsymbol{\alpha}\right)$ is available, that is, if the polytope is written as the intersection of finitely many half spaces $\mathbf{a}_{j}^{\top}\z\leq b_{j}$, the signed normalized margin for a sample $\z_{0}$ is obtained directly as
\begin{align}
    \varphi_{\Z}\left(\boldsymbol{\alpha},\z_{0}\right)
    =
    \min_{j}
    \dfrac{b_{j}-\mathbf{a}_{j}^{\top}\z_{0}}
    {\left\|\mathbf{a}_{j}\right\|}.
    \label{eq:normalized_h_margin}
\end{align}
Positive values certify interiority, zero identifies boundary contact, and negative values identify infeasible task wrenches. For an overactuated architecture, one can avoid explicit polytope construction by using directional reachability. Given a unit direction $\mathbf{d}\in\R^{6}$, define
\begin{align}
    s^{\star}\left(\z_{0},\mathbf{d}\right)
    =
    \max_{s,\boldsymbol{\lambda}}\; s
    \label{eq:direction_lp}
\end{align}
subject to $\A\left(\boldsymbol{\alpha}\right)\boldsymbol{\lambda}=\z_{0}+s\mathbf{d}$ with $\boldsymbol{\lambda}\in\Lambda$ and $s\geq 0$. Then $s^{\star}\left(\z_{0},\mathbf{d}\right)$ is the distance one can move from $\z_{0}$ in the direction $\mathbf{d}$ before leaving the feasible set. The exact radius can be written as $\inf_{\left\|\mathbf{d}\right\|=1}s^{\star}\left(\z_{0},\mathbf{d}\right)$. If $\mathcal{D}$ is a finite set of unit directions, the sampled quantity
\begin{align}
    \hat{\epsilon}_{z}\left(\z_{0}\right)
    =
    \min_{\mathbf{d}\in\mathcal{D}}
    s^{\star}\left(\z_{0},\mathbf{d}\right)
    \label{eq:sampled_radius}
\end{align}
is a directional diagnostic rather than, by itself, a conservative certificate of the Euclidean ball, because the minimum is taken over fewer directions than the unit sphere. A certified finite-direction implementation requires either an $H$-representation, the actuator-slack certificate in \Cref{thm:certificate}, or an explicit direction-net covering error. For a sampled task set $\mathcal{S}_{\T}=\{\z_{1},\ldots,\z_{N_{s}}\}\subset\T_{\mathrm{push}}$, the sampled directional task margin is
\begin{align}
    \hat{\epsilon}_{\mathrm{push}}
    =
    \min_{k\in\{1,\ldots,N_{s}\}}
    \hat{\epsilon}_{z}\left(\z_{k}\right).
\end{align}
This computation is useful as a design diagnostic when $N>6$, when $\M\left(\boldsymbol{\alpha}\right)$ has nontrivial null-space redundancy, or when the actuator bounds include additional power, rate, or health constraints. In the square full-rank case, the exact closed-form expression used in \Cref{sec:numerics} is recovered from the inverse allocation map.

The same metric can distinguish a fixed morphology from a task-adaptive morphology. Let $\varphi\left(\boldsymbol{\alpha},\z\right):=\varphi_{\Z}\left(\boldsymbol{\alpha},\z\right)$ denote the signed normalized margin. For a task set $\T_{z}$, the best fixed morphology margin is
\begin{align}
    \epsilon_{\mathrm{fix}}^{\star}
    =
    \sup_{\boldsymbol{\alpha}\in\Aset}
    \inf_{\z\in\T_{z}}
    \varphi\left(\boldsymbol{\alpha},\z\right),
    \label{eq:fixed_margin}
\end{align}
whereas the best task-adaptive quasi-static margin is
\begin{align}
    \epsilon_{\mathrm{ad}}^{\star}
    =
    \inf_{\z\in\T_{z}}
    \sup_{\boldsymbol{\alpha}\in\Aset}
    \varphi\left(\boldsymbol{\alpha},\z\right).
    \label{eq:adaptive_margin}
\end{align}

\begin{proposition}[Non-reduction of Pointwise Authority Under Task-Adaptive Morphology]
\label{prop:nonreduction}
For any compact task set $\T_{z}$ and admissible morphology set $\Aset$, $\epsilon_{\mathrm{ad}}^{\star}
    \geq
    \epsilon_{\mathrm{fix}}^{\star}$.
\end{proposition}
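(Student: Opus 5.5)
This is the elementary max–min inequality $\sup_{\alpha}\inf_{z}\varphi\le\inf_{z}\sup_{\alpha}\varphi$ applied to the signed margin $\varphi(\boldsymbol{\alpha},\z)=\varphi_{\Z}(\boldsymbol{\alpha},\z)$. No structure of $\varphi$ is needed beyond its being an extended-real-valued function on $\Aset\times\T_{z}$. In particular, no convexity, continuity, or compactness is needed, so the compactness hypotheses in the statement serve only to keep the quantities in \eqref{eq:fixed_margin} and \eqref{eq:adaptive_margin} well defined.

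The first step fixes an arbitrary $\boldsymbol{\alpha}'\in\Aset$ and an arbitrary $\z'\in\T_{z}$. The chain
\begin{align*}
\inf_{\z\in\T_{z}}\varphi\left(\boldsymbol{\alpha}',\z\right)\leq\varphi\left(\boldsymbol{\alpha}',\z'\right)\leq\sup_{\boldsymbol{\alpha}\in\Aset}\varphi\left(\boldsymbol{\alpha},\z'\right)
\end{align*}
holds directly from the definitions of infimum and supremum.

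The second step uses the fact that the left-hand side does not depend on $\z'$. Taking the infimum over $\z'\in\T_{z}$ on the right gives
\begin{align*}
\inf_{\z\in\T_{z}}\varphi\left(\boldsymbol{\alpha}',\z\right)\leq\epsilon_{\mathrm{ad}}^{\star}.
\end{align*}
The right-hand side is now independent of $\boldsymbol{\alpha}'$. Taking the supremum over $\boldsymbol{\alpha}'\in\Aset$ on the left then yields $\epsilon_{\mathrm{fix}}^{\star}\leq\epsilon_{\mathrm{ad}}^{\star}$.

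The only point needing care is the degenerate case, where some morphologies give $\varphi=-\infty$ (an empty polytope) or $\varphi=+\infty$ (no constraints). Neither arises here: under \Cref{ass:rank} each $\Z(\boldsymbol{\alpha})$ is a bounded, full-dimensional polytope by \Cref{lem:geometry}, so its $H$-representation has finitely many nontrivial half-spaces and $\varphi$ is finite-valued. Even without that, the argument goes through verbatim in the extended reals, since all steps are monotonicity properties of $\inf$ and $\sup$. I would close by noting the interpretation: the inequality is generally strict, because a task-adaptive morphology may choose a different $\boldsymbol{\alpha}$ for each $\z\in\T_{z}$, whereas a fixed design must serve the whole set with one $\boldsymbol{\alpha}$.
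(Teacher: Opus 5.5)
Your proof is correct and is the same as the paper's, which simply cites the standard max--min inequality $\sup_{\boldsymbol{\alpha}}\inf_{\z}\varphi\leq\inf_{\z}\sup_{\boldsymbol{\alpha}}\varphi$; you additionally write out its two-line derivation from the definitions of $\inf$ and $\sup$. A minor aside: \Cref{ass:rank} is stated for a single fixed morphology and need not hold on all of $\Aset$ (the paper's adaptive sweep includes the rank-four case $\gamma=0$), so your finiteness remark is not justified as stated, but nothing depends on it because, as you note, the argument works verbatim in the extended reals.
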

\begin{proof}
The claim is the standard minimax inequality
\begin{align}
\sup_{\boldsymbol{\alpha}\in\Aset}\inf_{\z\in\T_{z}}\varphi\left(\boldsymbol{\alpha},\z\right)
    \leq
    \inf_{\z\in\T_{z}}\sup_{\boldsymbol{\alpha}\in\Aset}\varphi\left(\boldsymbol{\alpha},\z\right).
\end{align}
\end{proof}
\Cref{prop:nonreduction} should be interpreted carefully. It is a quasi-static actuation statement, not a claim about closed-loop performance with tilt servos. A dynamic variable-tilt implementation must also preserve residual authority during transients, satisfy tilt-rate limits, and avoid singular configurations. Nevertheless, the result identifies the attractiveness of morphology adaptation for contact-rich UAVs as it can align the feasible wrench polytope with the current task direction while still reserving stabilizing authority.

\begin{table*}[t]
\centering
\caption{Distinguishing morphology classes based on the residual-authority framework.}
\label{tab:morphology_classes}
\begin{tabular}{p{0.23\textwidth}p{0.22\textwidth}p{0.25\textwidth}p{0.22\textwidth}}
\toprule
Morphology class & Structural property & Residual-authority behavior & Design implication\\
\midrule
Coplanar multirotor & $\rank\left(\M\right)\leq4$ for parallel thrust axes & No six-dimensional residual wrench ball at fixed attitude & Contact requires body reorientation or task relaxation\\
Fixed-tilt fully actuated multirotor & Can satisfy $\rank\left(\M\right)=6$ & Positive margin only when the task set lies strictly inside $\W$ & Tilt angle must balance lateral authority and hover efficiency\\
Overactuated fixed morphology & $N>6$ with null-space redundancy & Allocation can trade tracking, power, and residual margin & Redundancy should be used to maximize distance from saturation\\
Variable-tilt morphology & $\M\left(\boldsymbol{\alpha}\right)$ changes with morphology & Task-adaptive margins can improve directional contact capacity & Requires dynamic allocation with servo constraints\\
Task-biased contact morphology & Full rank or overactuated near a preferred direction & High best-direction authority but possibly anisotropic & Useful for persistent pushing, bracing, or inspection along known normals\\
\bottomrule
\end{tabular}
\end{table*}

\section{Numerical Evaluation}\label{sec:numerics}
The numerical study evaluates the signed residual-authority framework on an abstract symmetric hexarotor. The objective is to isolate the distinction among structural rank, task feasibility, residual stabilization authority, and margin-aware command modification. The study is not intended as a validation of a particular airframe. The parameters represent a small aerial physical-interaction platform. The mass follows the scale of the coaxial-tiltrotor pushing experiments in \cite{hwang2024pushing}, whereas the arm length, drag-to-thrust coefficient, and rotor limit are representative design-scale values. All contact forces and torques are normalized by $mg$ and $\ell mg$, respectively, so the reported results should be interpreted as morphology and authority trends rather than hardware-specific performance predictions.

The parameters are $N=6$, $m=3.3~\mathrm{kg}$, $\ell=0.35~\mathrm{m}$, $c_{\tau}=0.025~\mathrm{m}$, and $0\leq\lambda_{i}\leq13~\mathrm{N}$ for $i=1,\ldots,6$. The wrench normalization uses $F_{0}=mg$ and $\ell_{0}=\ell$. Unless stated otherwise, the pushing direction is $\mathbf{n}=\e_{1}$, the normalized force satisfies $\rho=F/(mg)\in[0,0.15]$, and the normalized contact-torque uncertainty satisfies $\left\|\boldsymbol{\mu}\right\|\leq0.04$, where $\boldsymbol{\mu}=\boldsymbol{\tau}_{c}/(\ell mg)$. The required residual margin is $\epsilon_{0}=0.025$.

For fixed-tilt cases, the alternating mixed-tilt pattern is $\gamma_{i}=\gamma,
~
\eta_{i}=\left(-1\right)^{i+1}\dfrac{\pi}{4};
~ i=1,\ldots,6$. This pattern provides a representative full-rank geometry for evaluating the proposed quantities. Whenever $\A\left(\boldsymbol{\alpha}\right)$ is nonsingular, define $\mathbf{B}=\A^{-1}$ and denote the $i$th row of $\mathbf{B}$ by $\boldsymbol{\beta}_{i}^{\top}$. For a normalized wrench $\z_{0}$, the unique allocation is $\boldsymbol{\lambda}_{0}=\mathbf{B}\z_{0}$. Its exact pointwise signed margin is
\begin{align}
\varphi_{\Z}\left(\boldsymbol{\alpha},\z_{0}\right)
=&
\min_{i}
\left\{
\dfrac{\lambda_{i,\max}-\lambda_{0,i}}
{\left\|\boldsymbol{\beta}_{i}\right\|},
\dfrac{\lambda_{0,i}-\lambda_{i,\min}}
{\left\|\boldsymbol{\beta}_{i}\right\|}
\right\}.
\label{eq:square_signed_margin}
\end{align}
Thus, positive, zero, and negative values indicate strict interiority, boundary contact, and infeasibility, respectively.

For the continuous pushing set, let
\begin{align}
\z
=&
\z_{h}
+\rho
\begin{bmatrix}
\mathbf{n}^{\top}&\zero_{3}^{\top}
\end{bmatrix}^{\top}
+
\begin{bmatrix}
\zero_{3}^{\top}&\boldsymbol{\mu}^{\top}
\end{bmatrix}^{\top},
&
\z_{h}
=&
\begin{bmatrix}
\e_{3}^{\top}&\zero_{3}^{\top}
\end{bmatrix}^{\top}.
\label{eq:numerical_task_wrench}
\end{align}
Partitioning $\boldsymbol{\beta}_{i}=\begin{bmatrix}\boldsymbol{\beta}_{i,f}^{\top}&\boldsymbol{\beta}_{i,\tau}^{\top}\end{bmatrix}^{\top}$, the exact worst-case lower and upper actuator slacks over $\rho\in[0,\rho_{\max}]$ and $\left\|\boldsymbol{\mu}\right\|\leq\mu_{\max}$ are
\begin{align}
s_{i}^{-}
=&
\boldsymbol{\beta}_{i}^{\top}\z_{h}
+\min\left\{0,\rho_{\max}\boldsymbol{\beta}_{i,f}^{\top}\mathbf{n}\right\}
-\mu_{\max}\left\|\boldsymbol{\beta}_{i,\tau}\right\|
-\lambda_{i,\min},
\label{eq:exact_lower_slack}\\
s_{i}^{+}
=&
\lambda_{i,\max}
-\boldsymbol{\beta}_{i}^{\top}\z_{h}
-\max\left\{0,\rho_{\max}\boldsymbol{\beta}_{i,f}^{\top}\mathbf{n}\right\}
-\mu_{\max}\left\|\boldsymbol{\beta}_{i,\tau}\right\|.
\label{eq:exact_upper_slack}
\end{align}
The exact signed task-set margin and the corresponding nonnegative residual radius are therefore
\begin{align}
\varphi_{\mathrm{push}}^{\star}
=&
\min_{i}
\dfrac{\min\left\{s_{i}^{-},s_{i}^{+}\right\}}
{\left\|\boldsymbol{\beta}_{i}\right\|},
&
\epsilon_{\mathrm{push}}^{\star}
=&
\max\left\{0,\varphi_{\mathrm{push}}^{\star}\right\}.
\label{eq:exact_signed_task_margin}
\end{align}
All task-set values below use the continuous torque ball analytically; no finite sampling of contact-torque directions is used.

\begin{table}[t]
\centering
\caption{Signed residual authority for $\rho_{\max}=0.15$ and $\mu_{\max}=0.04$. The certified force $F_{\mathrm{push}}^{\star}$ preserves $\epsilon_{0}=0.025$.}
\label{tab:results}
\begin{tabular}{ccccc}
\toprule
$\gamma$ (deg) & $\rank\left(\A\right)$ & Feasible? &
$\varphi_{\mathrm{push}}^{\star}$ & $F_{\mathrm{push}}^{\star}/mg$\\
\midrule
$0$  & $4$ & No  & --        & $0$\\
$10$ & $6$ & No  & $-0.0782$ & $0.0238$\\
$20$ & $6$ & No  & $-0.0231$ & $0.0873$\\
$30$ & $6$ & Yes & $0.0295$  & $0.1564$\\
$36$ & $6$ & Yes & $0.0500$  & $0.1879$\\
$37$ & $6$ & Yes & $0.0499$  & $0.1883$\\
$40$ & $6$ & Yes & $0.0474$  & $0.1860$\\
$50$ & $6$ & Yes & $0.0164$  & $0.1335$\\
\bottomrule
\end{tabular}
\end{table}

\begin{figure*}[t]
\centering
\begin{subfigure}[t]{0.485\textwidth}
\centering
\includegraphics[width=\linewidth]{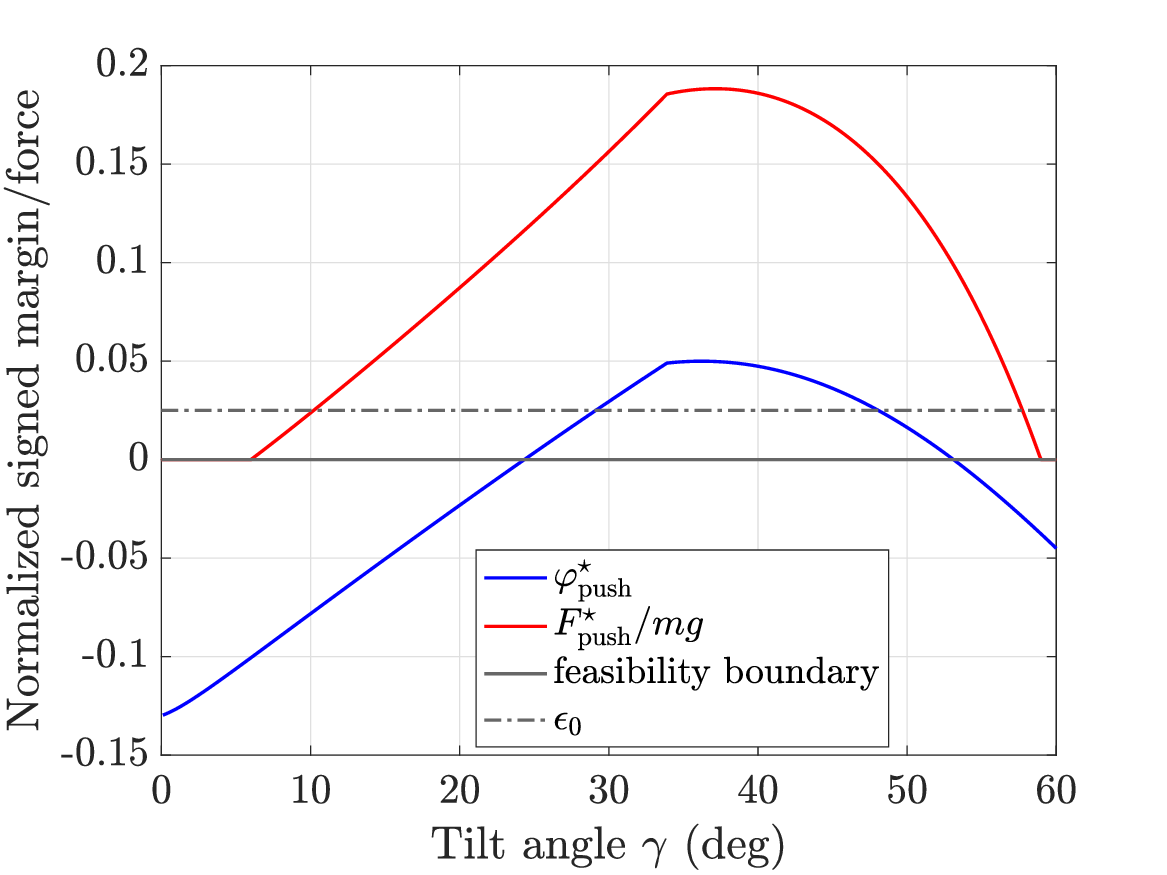}
\caption{Signed task margin and certified pushing force.}
\label{fig:margin_vs_tilt}
\end{subfigure}
\hfill
\begin{subfigure}[t]{0.485\textwidth}
\centering
\includegraphics[width=\linewidth]{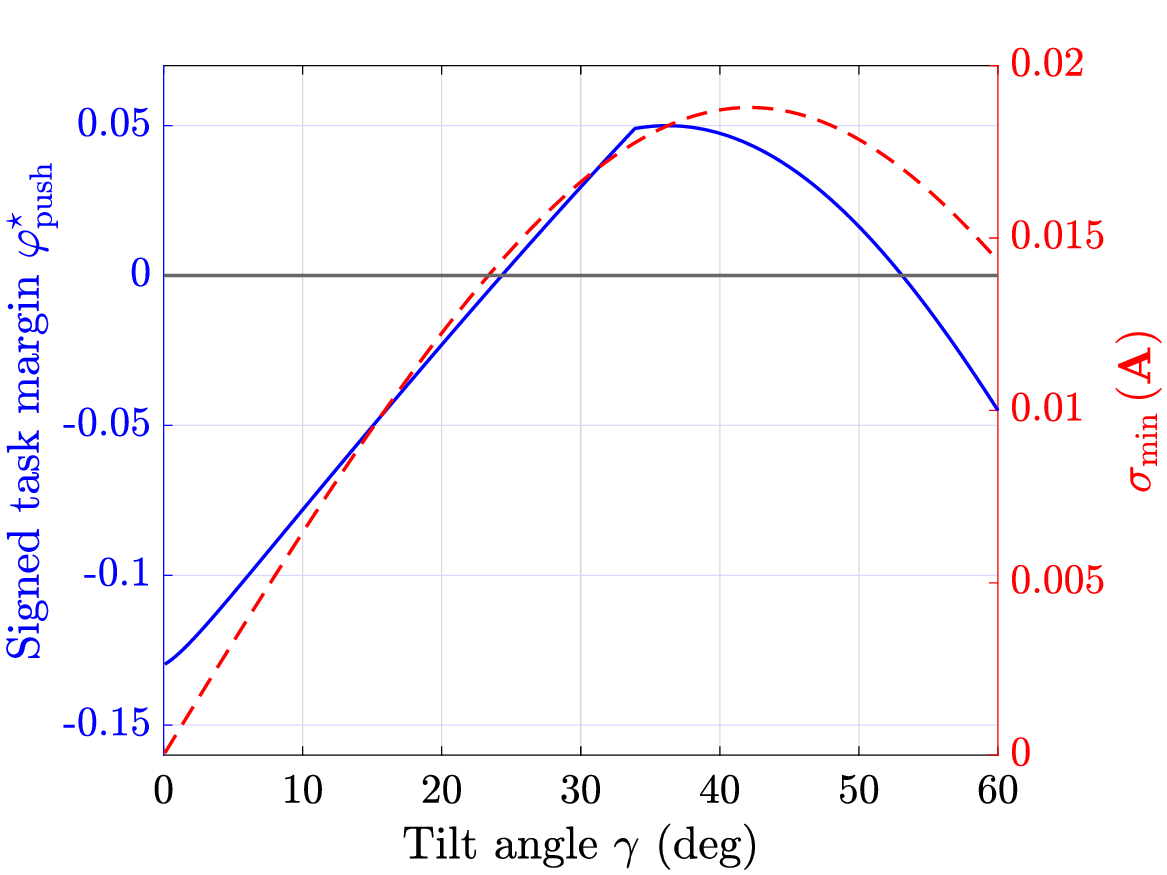}
\caption{Smallest singular value and signed task margin.}
\label{fig:conditioning_vs_margin}
\end{subfigure}
\caption{Tilt-angle study. The signed margin detects task infeasibility even when the allocation map remains full rank and well conditioned.}
\label{fig:tilt_study}
\end{figure*}

\subsection{Morphology Sweep and Directional Adaptation}

\Cref{tab:results} and \Cref{fig:margin_vs_tilt} show that the coplanar morphology is rank deficient, while the low-tilt cases $\gamma=10^{\circ}$ and $\gamma=20^{\circ}$ are rank six but have negative signed task margins. The prescribed task becomes feasible near $\gamma=24.3^{\circ}$ and loses feasibility again near $\gamma=53.1^{\circ}$. Requiring the stronger margin $\varphi_{\mathrm{push}}^{\star}\geq\epsilon_{0}$ restricts the useful interval to approximately $29.1^{\circ}\leq\gamma\leq48.0^{\circ}$. The maximum signed task margin is approximately $0.05$ at $\gamma=36.2^{\circ}$, whereas the certified pushing force peaks at approximately $0.1883mg$ near $\gamma=37.1^{\circ}$. These different optima reflect the distinction between maximizing the margin for a prescribed task set and maximizing admissible task magnitude subject to a prescribed residual margin.

The comparison in \Cref{fig:conditioning_vs_margin} further shows that numerical conditioning alone is not a task-feasibility certificate. The smallest singular value remains positive for every nonzero tilt in the sweep and reaches its maximum near $\gamma=42.2^{\circ}$, yet the signed task margin is negative for both small and excessive tilts. Hence, rank and singular-value information describe structural wrench generation, whereas $\varphi_{\mathrm{push}}^{\star}$ captures task loading, actuator bounds, hover demand, and contact uncertainty simultaneously.

Directional effects are evaluated using $\mathbf{n}\left(\theta\right)=\begin{bmatrix}\cos\left(\theta\right)&\sin\left(\theta\right)&0\end{bmatrix}^{\top}$ for $\theta\in[0,2\pi]$. For the fixed morphology $\gamma=37^{\circ}$,
\begin{align}
0.1667
\leq&
\dfrac{F_{\mathrm{push}}^{\star}\left(\theta\right)}{mg}
\leq
0.2003,
\end{align}
with mean $0.1820$. Selecting $\gamma\in[0^{\circ},60^{\circ}]$ pointwise for each direction changes the range to
\begin{align}
0.1667
\leq&
\dfrac{F_{\mathrm{push}}^{\star}\left(\theta\right)}{mg}
\leq
0.2630,
\end{align}
with mean $0.1978$.

\begin{figure}[t]
\centering

\includegraphics[width=\columnwidth]{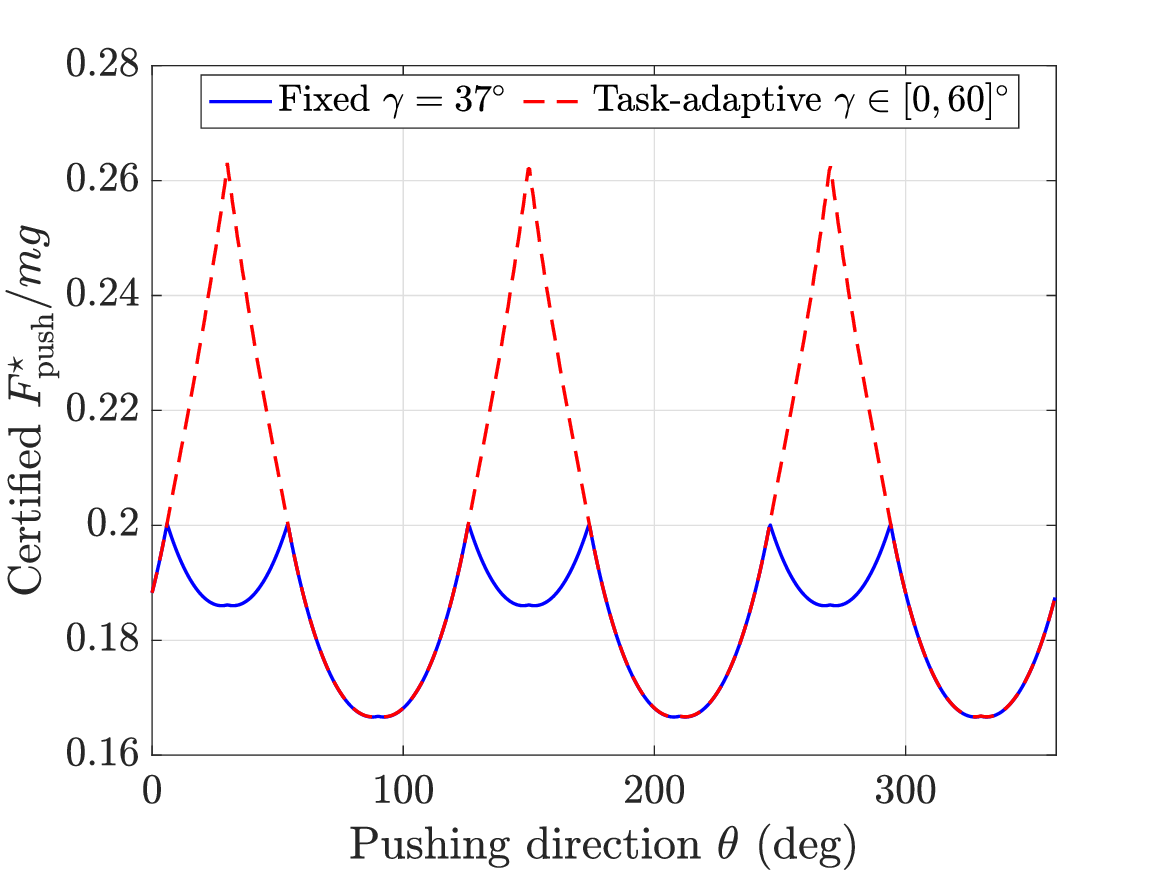}
\caption{Certified directional pushing capacity for the fixed morphology $\gamma=37^{\circ}$ and a task-adaptive morphology with $\gamma\in[0^{\circ},60^{\circ}]$.}
\label{fig:Fpush}
\end{figure}

As shown in \Cref{fig:Fpush}, task-adaptive tilt selection increases the mean certified capacity by approximately $8.7\%$ and the best-direction capacity by approximately $31.3\%$, while preserving the same required residual margin. The minimum directional capacities are nearly unchanged, indicating that adaptation primarily enlarges authority along favorable task directions rather than uniformly expanding the worst-direction capability.

\subsection{Signed-Margin and Robust-Loading Validation}

The signed-margin sweep varies the requested force limit over $\rho_{\max}\in[0,0.30]$ for $\gamma\in\{10^{\circ},20^{\circ},30^{\circ},36^{\circ},50^{\circ}\}$. Unlike the nonnegative residual radius, the signed quantity continues below zero and therefore shows the severity of infeasibility. For example, at $\gamma=36^{\circ}$, the task-set margin reaches $\epsilon_{0}$ at $\rho_{\max}=0.1879$ and reaches the feasibility boundary at $\rho_{\max}=0.2259$. For $\gamma=10^{\circ}$, these thresholds reduce to $0.0238$ and $0.0543$, respectively. The certified limits reported in \Cref{tab:results} are therefore precisely the intersections with $\varphi_{\mathrm{push}}^{\star}=\epsilon_{0}$, rather than maximum-force values obtained without reserving stabilization authority.

\begin{figure*}[t]
\centering
\begin{subfigure}[t]{0.485\textwidth}
\centering
\includegraphics[width=\linewidth]{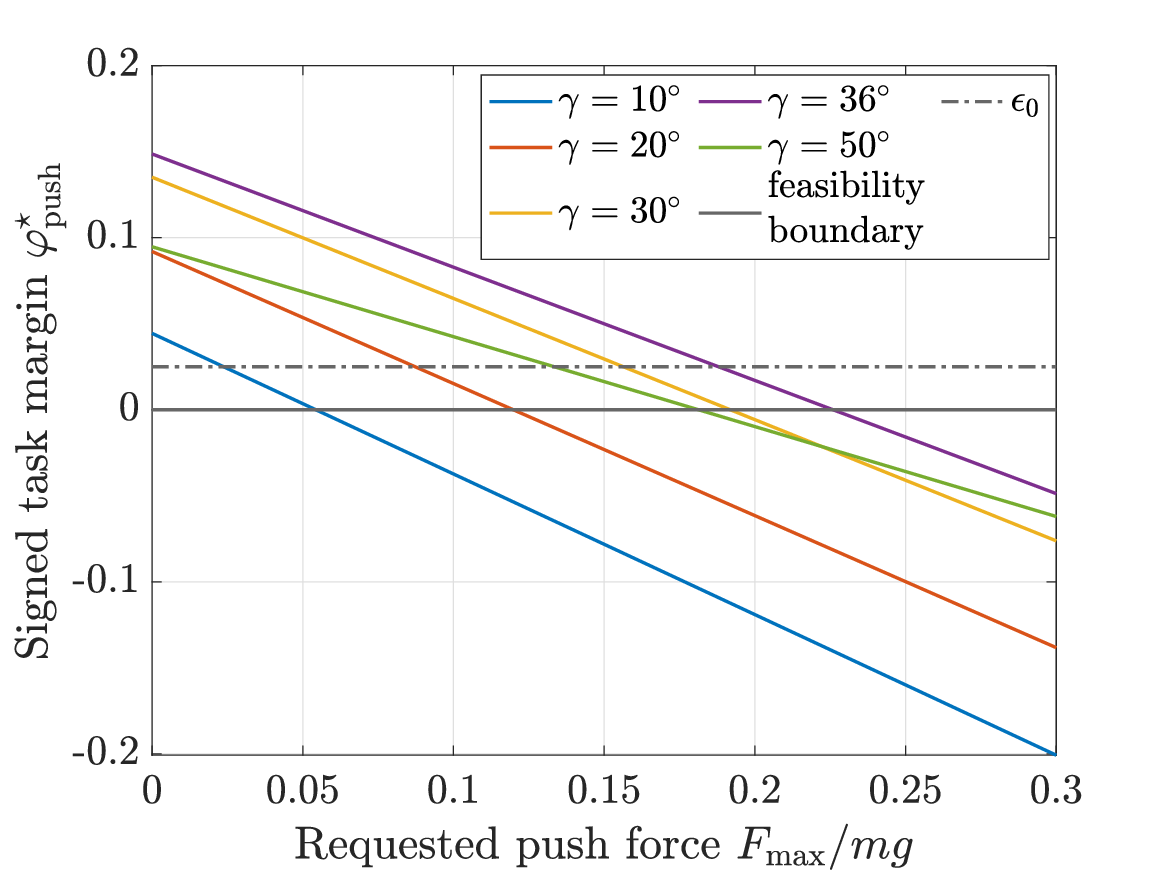}
\caption{Signed task margin versus requested pushing force.}
\label{fig:signed_margin_vs_push}
\end{subfigure}
\hfill
\begin{subfigure}[t]{0.485\textwidth}
\centering

\includegraphics[width=\linewidth]{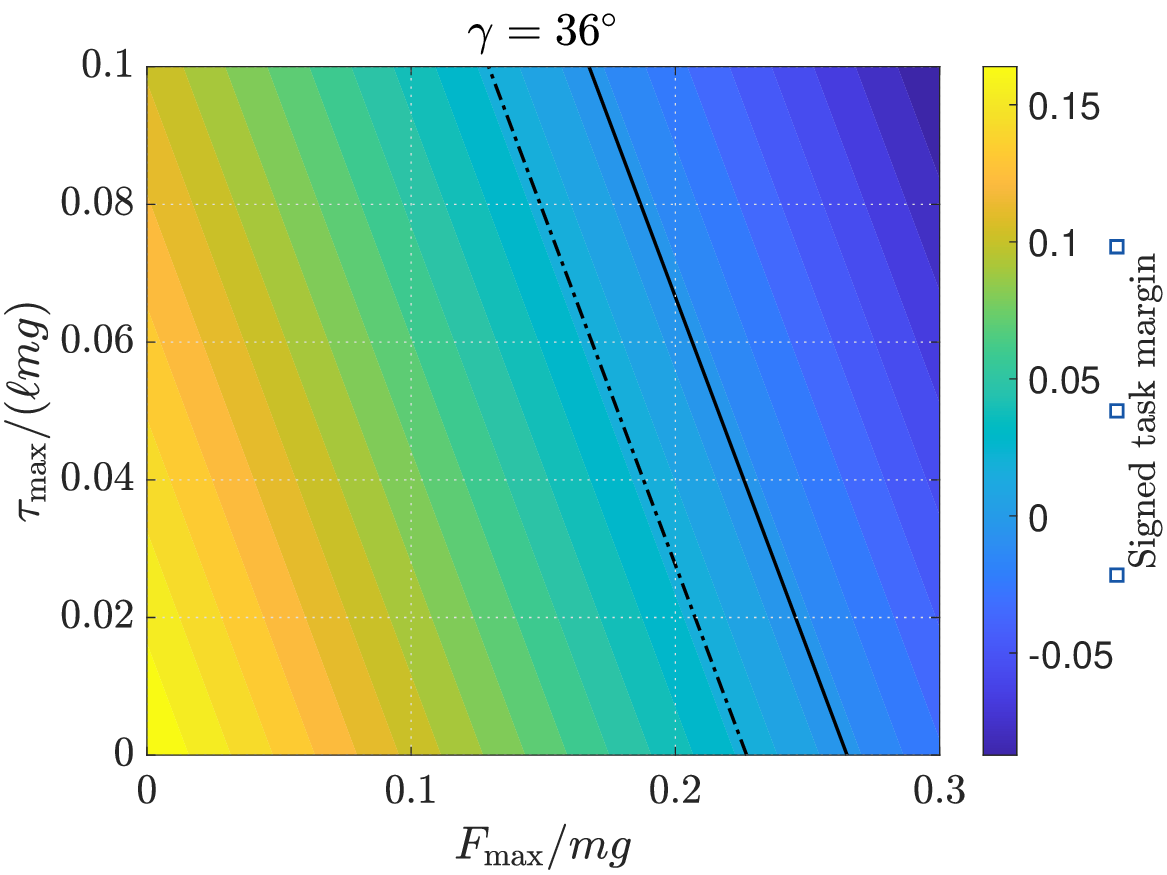}
\caption{Robust force--torque feasibility map at $\gamma=36^{\circ}$.}
\label{fig:robust_feasibility_map}
\end{subfigure}
\caption{Signed-margin validation under increasing contact loading. The solid zero contour identifies feasibility loss, while the $\epsilon_{0}$ contour identifies the admissible set with reserved stabilization authority.}
\label{fig:robust_loading}
\end{figure*}

The two-parameter map in \Cref{fig:robust_feasibility_map} exposes the force--torque tradeoff. At the nominal loading pair $\left(\rho_{\max},\mu_{\max}\right)=\left(0.15,0.04\right)$, the signed task margin is $0.05$. For $\mu_{\max}=0.04$, the $\epsilon_{0}$ and zero-margin force boundaries occur at $\rho_{\max}=0.1879$ and $\rho_{\max}=0.2259$, respectively. Conversely, at $\rho_{\max}=0.15$, the required-margin boundary occurs near $\mu_{\max}=0.0789$. This map gives a design chart for allocating residual authority among contact force, contact-torque uncertainty, and stabilization demand.

The robust implementability condition in \Cref{cor:robust_filter} is also evaluated with $\bar{w}_{\mathrm{stab}}=0.020$ and $\bar{w}_{\mathrm{unc}}=0.015$. Since
\begin{align}
\bar{w}_{\mathrm{stab}}+\bar{w}_{\mathrm{unc}}
=&
0.035
<
\varphi_{\mathrm{push}}^{\star}
=
0.05,
\end{align}
the complete task set remains feasible under all perturbations within the prescribed stabilization and uncertainty balls. A $20{,}000$-sample Monte Carlo check over the continuous task set and both perturbation balls produced no actuator-feasibility violations. This randomized check is used only as a numerical sanity test; the guarantee follows from the deterministic signed-margin condition.

\subsection{Allocation Certificate and Margin-Aware Filtering}

For the representative point $\gamma=36^{\circ}$, $\rho=0.15$, and $\boldsymbol{\mu}=\zero_{3}$, the exact allocation is
\begin{align}
\boldsymbol{\lambda}_{0}
=&
\begin{bmatrix}
6.515&10.098&3.240&6.823&10.253&3.086
\end{bmatrix}^{\top}\mathrm{N},
\label{eq:representative_allocation}
\end{align}
with normalized usages
\begin{align}
\dfrac{\boldsymbol{\lambda}_{0}}{\lambda_{\max}}
=&
\begin{bmatrix}
0.501&0.777&0.249&0.525&0.789&0.237
\end{bmatrix}^{\top}.
\label{eq:representative_usage}
\end{align}
The minimum actuator slack is $2.747~\mathrm{N}$. For this symmetric square architecture, the allocation is unique and all rows of $\A^{-1}$ have the same Euclidean norm. Consequently, the slack certificate equals the exact pointwise margin over the plotted force range; at $\rho=0.15$, both give approximately $0.0757$. The equality is architecture-specific. In a general overactuated system, \Cref{cor:slack_max} provides a lower bound whose tightness depends on the selected right inverse and the slack-maximizing allocation.

\begin{figure*}[t]
\centering
\begin{subfigure}[t]{0.485\textwidth}
\centering
\includegraphics[width=\linewidth]{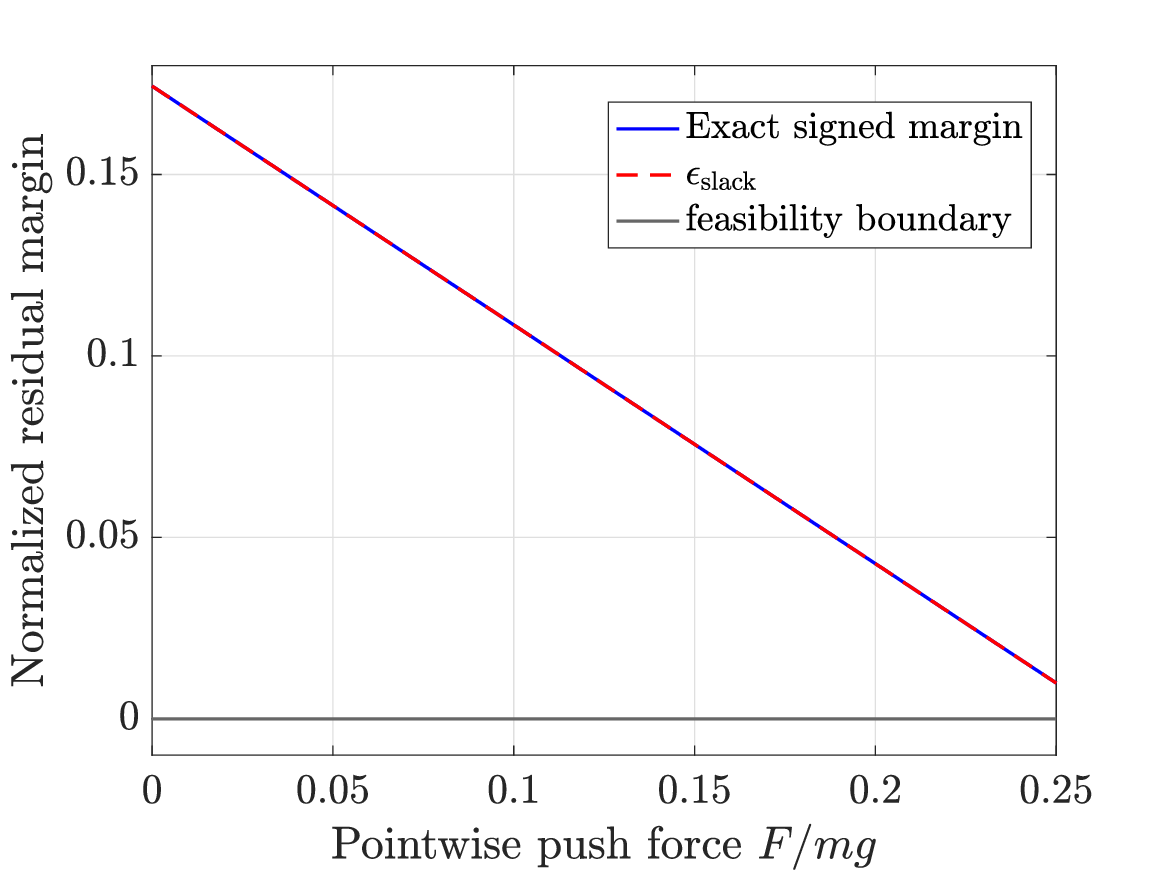}
\caption{Exact pointwise margin and actuator-slack certificate.}
\label{fig:certificate_comparison}
\end{subfigure}
\hfill
\begin{subfigure}[t]{0.485\textwidth}
\centering
\includegraphics[width=\linewidth]{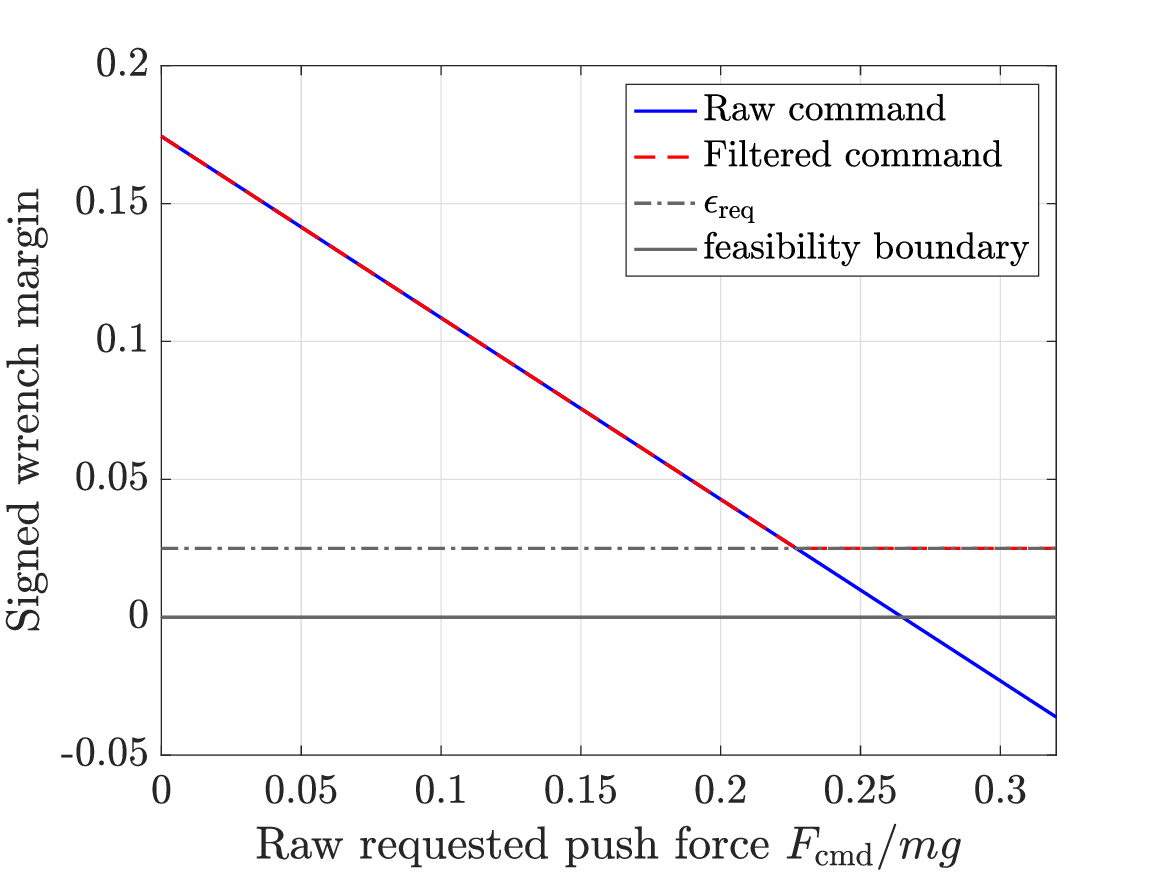}
\caption{Raw and margin-filtered command margins.}
\label{fig:margin_filter}
\end{subfigure}
\caption{Validation of the computable allocation certificate and the margin-aware wrench filter.}
\label{fig:certificate_filter}
\end{figure*}

The margin-aware filter is evaluated at $\gamma=36^{\circ}$ using raw commands $\rho_{\mathrm{cmd}}\in[0,0.32]$ and $\epsilon_{\mathrm{req}}=\epsilon_{0}$. The unfiltered command reaches the required-margin boundary at approximately $\rho_{\mathrm{cmd}}=0.2270$ and becomes infeasible near $\rho_{\mathrm{cmd}}=0.2650$. The filter is inactive below the required-margin boundary. Above it, the command is projected onto the tightened wrench polytope and the filtered signed margin remains exactly at or above $0.025$. Thus, the filter rejects only the portion of the requested wrench that would consume the reserved stabilization authority, rather than waiting until actuator infeasibility occurs.

\section{Discussion and Future Control Problems}
\label{sec:future}
The results also expose several open control-theoretic problems that merit further investigation. The certificates above show how residual authority can be embedded in real-time allocation and filtering, but several dynamic issues remain. First, the static margin should be combined with closed-loop pose and force dynamics to derive stability conditions under saturated allocation. Second, the morphology variable can be treated as a slow control input, producing a co-design problem in which $\boldsymbol{\alpha}$ is selected to maximize residual authority while the fast thrust allocation tracks the desired wrench. Third, for variable-tilt platforms, transient feasibility must account for servo-rate limits and possible loss of rank during reconfiguration. Finally, margin-aware safety filters for aerial physical interaction should enforce not only contact-force feasibility, but also preservation of the residual wrench authority required for stabilization.

The framework also clarifies a design tradeoff. Fixed tilts improve lateral force capability but may reduce vertical thrust efficiency. Variable tilt improves task adaptation but introduces servo dynamics, rate constraints, and possible transient authority loss. These effects are not captured by the static polytope alone. However, the residual authority margin provides the baseline constraint that any dynamic morphology controller must preserve.

\section{Conclusions}
This paper introduced contact-persistent full actuation as a residual wrench authority property for fully actuated UAVs in contact-rich tasks. The proposed framework replaces the purely rank-based view of full actuation with a constraint-aware condition based on the feasible wrench polytope. The main result shows that a task wrench admits nonzero six-dimensional residual authority if and only if it lies in the interior of the feasible wrench set, and that the maximum residual authority radius is the distance to the polytope boundary. A computable allocation-margin certificate, a signed-margin robust feasibility condition, and a task-set uniform version were derived. Numerical evaluation on an abstract tilted hexarotor showed that rank-six allocation can still be insufficient for pushing tasks, and that intermediate or adaptive tilt choices can preserve residual stabilization authority. Future work will integrate these margins into saturation-aware allocation, dynamic tilt control, and safety-certified aerial physical interaction.

\bibliographystyle{IEEEtran}
\bibliography{references}
\end{document}